\newtheorem{defn}{Definition}
\newcommand{\x}{\mathbf{x}}
\newcommand{\F}{\mathbf{F}}
\newcommand{\G}{\mathbf{G}}
\newcommand{\bd}{\mathbf{d}}
\newcommand{\Blambda}{\bm{\lambda}}
\newcommand{\ee}{\mathbb{E}_{\xi}}
\newcommand{\mv}{\mathbb{V}_{\xi}}
\newcommand{\alg}{PSMGD }
\newcommand{\mr}{{\textbf{MR}}}
\newcommand{\dm}{{\bm{$\Delta m\%$}}}
\newcommand{\best}[1]{{\textbf{\textcolor{contextcolor}{#1}}}}
\newcommand{\stl}{\textsc{STL}}
\newcommand{\ls}{\textsc{LS}}
\newcommand{\si}{\textsc{SI}}
\newcommand{\rlw}{\textsc{RLW}}
\newcommand{\dwa}{\textsc{DWA}}
\newcommand{\uw}{\textsc{UW}}
\newcommand{\famo}{\textsc{FAMO}}
\newcommand{\psmg}{\textsc{PSMGD}}
\newcommand{\graddrop}{\textsc{GradDrop}}
\newcommand{\mgda}{\textsc{MGDA}}
\newcommand{\pcgrad}{\textsc{PCGrad}}
\newcommand{\imtlg}{\textsc{IMTL-G}}
\newcommand{\cagrad}{\textsc{CAGrad}}
\newcommand{\nashmtl}{\textsc{NashMTL}}
\newcommand{\sdmgrad}{\textsc{SDMGrad}}
\newcommand{\fairgrad}{\textsc{FairGrad}}
\newcommand{\moco}{\textsc{MoCo}}
\definecolor{contextcolor}{RGB}{31,160,171}
\def\eqref#1{equation~\ref{#1}}
\def\1{\bm{1}}
\def\d{{\mathrm{d}}}
\def\rvx{{\mathbf{x}}}
\def\rvy{{\mathbf{y}}}
\DeclareMathAlphabet{\mathsfit}{\encodingdefault}{\sfdefault}{m}{sl}
\SetMathAlphabet{\mathsfit}{bold}{\encodingdefault}{\sfdefault}{bx}{n}
\DeclareMathOperator*{\argmin}{arg\,min}
\newif\ifcomments
  \newcommand{\colornote}[3]{{\color{#1}\bf{#2: #3}\normalfont}}
  \newcommand{\colornote}[3]{}
\theoremstyle{plain}
\newtheorem{theorem}{Theorem}[section]
\newtheorem{lemma}[theorem]{Lemma}
\theoremstyle{definition}
\newtheorem{assumption}[theorem]{Assumption}
\theoremstyle{remark}
\newtheorem{remark}[theorem]{Remark}
\crefname{equation}{Eq.}{Eq.}
\crefname{figure}{Figure}{Figure~}
\crefname{table}{Table}{Table~}
\crefname{section}{Sec.}{Sec.~}
\crefname{algorithm}{Algorithm}{Algorithm~}
\crefname{thm}{Theorem}{Theorem~}
\crefname{lemma}{Lemma}{Lemma~}
\crefname{appendix}{Appendix}{Appendix~}
\def\x{\rvx}
\def\y{\rvy}
\title{
PSMGD: Periodic Stochastic Multi-Gradient Descent for Fast Multi-Objective Optimization
}
\author{
    Mingjing Xu\textsuperscript{\rm 1},
    Peizhong Ju\textsuperscript{\rm 2},
    Jia Liu\textsuperscript{\rm 3},
    Haibo Yang\textsuperscript{\rm 1}
}
\begin{document}

\maketitle


\begin{abstract}

Multi-objective optimization (MOO) lies at the core of many machine learning (ML) applications that involve multiple, potentially conflicting objectives. 
Despite the long history of MOO, recent years have witnessed a surge in interest within the ML community in the development of gradient manipulation algorithms for MOO, thanks to the availability of gradient information in many ML problems. 
However, existing gradient manipulation methods for MOO often suffer from long training times, primarily due to the need for computing dynamic weights by solving an additional optimization problem to determine a common descent direction that can decrease all objectives simultaneously.
To address this challenge, we propose a new and efficient algorithm called Periodic Stochastic Multi-Gradient Descent (PSMGD) to accelerate MOO. 
PSMGD is motivated by the key observation that dynamic weights across objectives exhibit small changes under minor updates over short intervals during the optimization process. 
Consequently, our PSMGD algorithm is designed to periodically compute these dynamic weights and utilizes them repeatedly, thereby effectively reducing the computational overload.
Theoretically, we prove that PSMGD can achieve state-of-the-art convergence rates for strongly-convex, general convex, and non-convex functions.
Additionally, we introduce a new computational complexity measure, termed backpropagation complexity, and demonstrate that PSMGD could achieve an objective-independent backpropagation complexity.
Through extensive experiments, we verify that PSMGD can provide comparable or superior performance to state-of-the-art MOO algorithms while significantly reducing training time.
\end{abstract}

\begin{links}
  \link{Code}{https://github.com/MarcuXu/PSMG}
  \link{Extended version}{https://arxiv.org/abs/2412.10961}
\end{links}

\section{Introduction} \label{sec: intro}

Just as the human world is full of diverse and potentially conflicting goals, many machine learning paradigms are also multi-objective, such as multi-objective reinforcement learning~\citep{hayes2022practical}, multi-task learning~\citep{caruana1997multitask,sener2018multi}, and learning-to-rank~\citep{mahapatra2023multi}.
At the core of these machine learning paradigms lies multi-objective optimization (MOO), which aims to optimize multiple potentially conflicting objectives simultaneously.
Formally, MOO can be mathematically cast as:
\begin{align} \label{eq: moo}
    \min_{\x \in \mathcal{D}} \F(\x) := [f_1(\x), \cdots, f_S(\x) ],
\end{align}
where $\x \in \mathcal{D} \subseteq \mathbb{R}^d$ is the model parameter, and $f_s : \mathbb{R}^d \rightarrow \mathbb{R}$, $s \in [S]$ is the objective function.

The most common approach for handling MOO is to optimize a weighted average of the multiple objectives, also known as linear scalarization~\citep{kurin2022defense,xin2022current}. 
This approach transforms MOO into a single-objective problem, for which there are various off-the-shelf methods available.
The linear scalarization approach has a fast per-iteration runtime but suffers from slow convergence and poor performance due to potential conflicts among objectives.
%
To address this problem, there has been a surge of interest in recent years in developing gradient manipulation algorithms~\citep{kendall2018multi,chen2018gradnorm,yu2020gradient,chen2020just,javaloy2021rotograd,liu2021stochastic,chen2024three,xiao2024direction}. The key idea is to calculate dynamic weights to avoid conflicts and find a direction $\bd(\x)$ that decreases all objectives simultaneously. 
This common descent vector $\bd(\x)$ is often determined by solving an additional optimization problem that involves all objective gradients. 
Updating the model with the common descent direction $\bd$ ensures a decrease in every objective.
While these approaches show improved performance, the extra optimization process requires a substantial amount of time. This is particularly evident when the number of objectives is large and the model parameters are high-dimensional, which could significantly prolong the MOO training process. 
Existing empirical evaluations consistently demonstrate that gradient manipulation algorithms typically necessitate much longer training times~\citep{kurin2022defense,liu2024famo} (also shown in Sec~\ref{sec: exp}).
To this end, a natural question arises: {\em Can we design efficient gradient manipulation algorithms for fast MOO with theoretical guarantees?}

In this paper, we present Periodic Stochastic Multi-Gradient Descent (PSMGD), a simple yet effective algorithm to accelerate the MOO. 
Our PSMGD is motivated by the key observation that dynamic weights across objectives exhibit small changes under minor updates over short intervals. 
Thus, it is designed to periodically compute these dynamic weights and utilizes them repeatedly, thereby effectively reducing the computational overload and achieving a faster MOO training process. Our contributions are summarized as follows:

\begin{table*}[h]
\begin{threeparttable}
\centering
\caption{Comparison of different algorithms for MOO problem in stochastic first order oracle.}
\begin{tabular}{|c|c|c|c|c|}
\hline
\multirow{2}{*}{\bf Convexity} & \multirow{2}{*}{\bf Algorithm} & \multirow{1}{*}{\bf Assume Lipschitz} & \multirow{1}{*}{\bf Convergence} & \multirow{2}{*}{\bf BP Complexity} \\
& & {\bf continuity of $\Blambda^*(\x)$} & {\bf Rate} & \\
\hline
\multirow{4}{*}{Strongly Convex} & SMG~\citep{liu2021stochastic} & \ding{51} & $\mathcal{O}(\frac{1}{T})$ & $\mathcal{O}(\frac{S}{\epsilon})$ \\
& MoDo~\citep{chen2024three} & \ding{55} & $\mathcal{O}(\frac{1}{T})$ & $\mathcal{O}(\frac{S}{\epsilon})$ \\
& CR-MOGM~\citep{zhou2022convergence} & \ding{55} & $\mathcal{O}(\frac{1}{T})$ & $\mathcal{O}(\frac{S}{\epsilon})$ \\
\rowcolor{gray!20} & \alg & \ding{55} & $\mathcal{O}(\frac{1}{T})$ & $\mathcal{O}(\frac{S}{\epsilon R} + \frac{(R-1)}{\epsilon R})$ \\
\hline
\multirow{3}{*}{General Convex} & SMG~\citep{liu2021stochastic} & \ding{51} & $\mathcal{O}(\frac{1}{\sqrt{T}})$ & $\frac{S}{\epsilon^2}$ \\
& CR-MOGM~\citep{zhou2022convergence} & \ding{55} & $\mathcal{O}(\frac{1}{\sqrt{T}})$ & $\frac{S}{\epsilon^2}$ \\
 \rowcolor{gray!20} & \alg & \ding{55} & $\mathcal{O}(\frac{1}{\sqrt{T}})$ & $\mathcal{O}(\frac{S}{\epsilon^2 R} + \frac{(R-1)}{\epsilon^2 R})$ \\
\hline
\multirow{5}{*}{Non-Convex} & CR-MOGM~\citep{zhou2022convergence} & \ding{55} & $\mathcal{O}(\frac{1}{\sqrt{T}})$ & $\frac{S}{\epsilon^2}$ \\
 & MoCO~\citep{fernando2022mitigating} & \ding{55} & $\mathcal{O}(\frac{1}{\sqrt{T}})$ & $\frac{S}{\epsilon^2}$ \\
 & MoDo~\citep{chen2024three} & \ding{55} & $\mathcal{O}(\frac{1}{\sqrt{T}})$ & $\frac{S}{\epsilon^2}$ \\
 & SDMGrad~\citep{xiao2024direction} & \ding{55} & $\mathcal{O}(\frac{1}{\sqrt{T}})$ & $\frac{S}{\epsilon^2}$ \\
 \rowcolor{gray!20} & \alg & \ding{55} & $\mathcal{O}(\frac{1}{\sqrt{T}})$ & $\mathcal{O}(\frac{S}{\epsilon^2 R} + \frac{(R-1)}{\epsilon^2 R})$ \\
\hline
\end{tabular}
\label{tab:rate}
\begin{tablenotes}
  \small
  \item 1.  BP Complexity measures the total number of backpropagation to achieve certain metric ($\epsilon$), which is defined in Def~\ref{def:BP}.
  \item 2. Notations: T is the number of iterations, S is the number of objectives, R is the weight calculation hyper-parameter.
\end{tablenotes}
\end{threeparttable}
\vspace{-0.1in}
\end{table*}

\begin{list}{\labelitemi}{\leftmargin=1em \itemindent=-0.0em \itemsep=.1em}
    \item Built upon the key observation, we introduce PSMGD,  a new and efficient gradient manipulation method. By calculating dynamic weights infrequently, it significantly alleviates computational burden and reduces training time.
    \item We conduct a theoretical analysis of PSMGD for strongly convex, convex, and non-convex functions, demonstrating that PSMGD achieves state-of-the-art convergence rates comparable to existing MOO methods (Table~\ref{tab:rate}). We also introduce a new computational complexity measure, backpropagation (BP) complexity, to quantify computational workload, and further show that PSMGD can achieve an objective-independent backpropagation complexity. 
    \item Through comprehensive evaluation, PSMGD shows comparable or even superior performance compared to existing MOO methods, with significantly less training time.
\end{list}

\section{Related works} \label{sec: relatedwork}

The MOO problem, as stated in Eq.~\ref{eq: moo}, has a long history that dates back to the 1950s. MOO algorithms can be broadly categorized into two main groups. The first category comprises gradient-free methods, such as evolutionary MOO algorithms and Bayesian MOO algorithms~\citep{zhang2007moea,deb2002fast,belakaria2020uncertainty,laumanns2002bayesian}. These methods are more suitable for small-scale problems but are less practical for high-dimensional models, such as deep neural networks. The second category is the gradient-based approach by utilizing (stochastic) gradients~\citep{fliege2000steepest,desideri2012multiple,fliege2019complexity,liu2021stochastic}, making them more practical for high-dimensional MOO problems. In this work, we primarily focus on gradient-based approaches to solve MOO in high-dimensional deep-learning models.
We first provide a brief overview of two typical approaches: linear scalarization (LS) and gradient manipulation methods.

\textbf{1) LS:}
A straightforward approach is to transform it into a single objective problem by using pre-defined weights $\Blambda$: $ \min_{\x \in \mathcal{D}} \Blambda^{\top} \F(\x).$
Thanks to the LS, one can leverage many existing single-objective methods (e.g., gradient descent), for training at each iteration. However, in LS, it is not uncommon to see conflicts among multiple objectives during the optimization process, i.e., $\left< \nabla f_s(\x), \nabla f_{s'}(\x) \right> < 0$. This implies that the update using static weights may decrease some objectives, while inevitably increasing others at the same time and leading to slow convergence and poor performance.

\textbf{2) Gradient Manipulation Methods:}
A popular alternative is to dynamically weight gradients across objectives to avoid such conflicts and obtain a direction $\bd$ to decrease all objectives simultaneously.
For example, multiple gradient descent algorithm (MGDA)~\citep{fliege2000steepest} seeks to find the $\bd$ by solving the following optimization problem given gradients:
$(\bd, \beta) \in \argmin_{\bd \in \mathbb{R}^S, \beta \in \mathbb{R}} \beta + \frac{1}{2} \| \bd \|^2, s.t., \nabla f_s(\x)^T \bd - \beta \leq 0, \forall s \in [S].$
By doing so, if $\x$ represents a first-order Pareto stationary point, then $(\bd, \beta) = (\mathbf{0}, 0)$. Otherwise, we can find the $\bd$ such that $\nabla f_s(\x)^T \bd \leq \beta < 0, s \in [S]$ as the solution.
Using such a non-conflicting
direction $\bd$ to update the model, i.e., $\x \leftarrow \x + \eta \d$ where $\eta$ is the learning rate, has been shown to achieve better performance in practice~\citep{sener2018multi,liu2021conflict}.
Following this token, many gradient manipulation methods attempt to obtain a common descent direction through various formulations and solutions. 
For example, \graddrop{}~\citep{chen2020just} randomly dropped out highly conflicted gradients, RotoGrad~\citep{javaloy2021rotograd} rotated objective gradients to alleviate the conflict, and many other methods exploring similar principles~\citep{kendall2018multi,chen2018gradnorm,yu2020gradient,liu2021conflict,liu2021stochastic,chen2024three,xiao2024direction}.
However, due to the additional optimization process required to generate such $\bd$, gradient manipulation methods often incur an expensive per-iteration cost, thereby prolonging training time. In this work, we propose a new algorithm, PSMGD, which achieves fast per-iteration and overall convergence, along with enhanced performance.

\section{Periodic stochastic multi-gradient descent}
\label{sec: moo}
In this section, we first present the basic concept of MOO, followed by a pedagogical example comparing linear scalarization with MGDA, a representative of gradient manipulation methods. 
Based on one observation of the stable variation of dynamic weights for MGDA, we propose the PSMGD algorithm, followed by its convergence analyses.

\subsection{Preliminaries of MOO}
Analogous to the stationary and optimal solutions in single-objective optimization, MOO seeks to adopt the notion of Pareto optimality/stationarity:
\begin{defn} [(Weak) Pareto Optimality]
    For any two solutions $\x$ and $\y$, we say $\x$ dominates $\y$ if and only if $f_s(\x) \leq f_s(\y), \forall s \in [S]$ and $f_s(\x) < f_s(\y), \exists s \in [S]$. 
    A solution $\x$ is Pareto optimal if it is not dominated by any other solution.
    One solution $\x$ is weakly Pareto optimal if there does not exist a solution $\y$ such that $f_s(\x) > f_s(\y), \forall s \in [S]$.
\end{defn}

\begin{defn} [Pareto Stationarity] \label{defn:ParetoStationarity}
    A solution $\x$ is said to be Pareto stationary if there is no common descent direction $\bd \in \mathbb{R}^d$ such that $\nabla f_s(\x)^{\top} \bd < 0, \forall s \in [S]$.
\end{defn}

Similar to solving single-objective non-convex optimization problems, finding a Pareto-optimal solution in MOO is NP-Hard in general.
As a result, it is often of practical interest to find a solution satisfying Pareto-stationarity (a necessary condition for Pareto optimality).
Following Definition~\ref{defn:ParetoStationarity}, if $\x$ is not a Pareto stationary point, we can find a common descent direction $\bd \in \mathbb{R}^d$ to decrease all objectives simultaneously, i.e., $\nabla f_s(\x)^{\top} \bd < 0, \forall s \in [S]$.
If no such a common descent direction exists at $\x$, then $\x$ is a Pareto stationary solution.
For example, multiple gradient descent algorithm (MGDA)~\citep{desideri2012multiple} searches for an optimal weight $\boldsymbol{\lambda}_*$ of gradients $\nabla \F(\x) := \{ \nabla f_s(\x), \forall s \in [S] \}$ by solving 
$\boldsymbol{\lambda}_*(\x) = \operatorname*{argmin}_{\boldsymbol{\lambda}} \| \boldsymbol{\lambda}^{\top} \nabla \F(\x) \|^2$, which is the dual of the original problem.
Then, a common descent direction can be chosen as: $\bd = \boldsymbol{\lambda}_*^{\top} \nabla \F(\x)$.
MGDA performs the iterative update rule: $\x \leftarrow \x - \eta \bd$ until a Pareto optimal/stationary point is reached, where $\eta$ is a learning rate. 
Many gradient manipulation algorithms have been inspired by MGDA. In the next subsection, we compare two typical gradient-based approaches using a pedagogical example, with MGDA representing the gradient manipulation approach.

\begin{figure*}[t!]
    \centering
    \includegraphics[width=\textwidth]{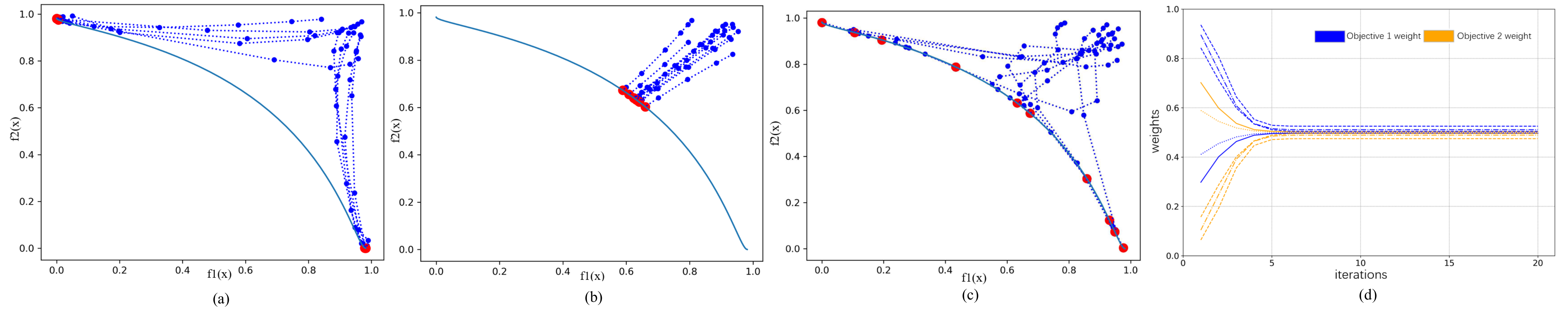}

    \caption{The convergence behaviors on a pedagogical example with 10 runs for each algorithm: (a) Solutions obtained from random linear scalarization. (b) Solutions obtained from the MGDA method. (c) Solutions obtained from the PSMGD method proposed in this paper, with weights updated every 4 iterations ($R=4$) through the training. (d) We visualize weights changing curves over iterations in 5 runs. This method successfully generates a set of widely distributed Pareto solutions with different trade-offs. Details of the pedagogical example can be found in Section~\ref{sec: syn}.
    }
    \label{fig:syn}
\end{figure*}

\subsection{A pedagogical example} \label{sec: syn}
We consider the Fonseca problem\citep{fonseca1996performance,pardalos2017non}, which is to solve a two-objective optimization problem, $\min_{\x} [f_1(\x), f_2(\x)]$. 
These two objective functions are defined by
$f_{1}(\x)=1-\exp \left(-\sum_{i=1}^{d}\left(x_{d}-\frac{1}{\sqrt{d}}\right)^{2}\right), f_{2}(\x)= 1-\exp \left(-\sum_{i=1}^{d}\left(x_{d}+\frac{1}{\sqrt{d}}\right)^{2}\right),$
where $\x = (x_1, x_2, ..., x_d)$ representing the d-dimensional decision variable. The problem exhibits a non-convex Pareto front in the objective space.
We run three algorithms on this pedagogical example: linear scalarization, MGDA, and our proposed \psmg{}.

{\em Setting.} 
We run each algorithm 10 times. In Figure~\ref{fig:syn}, we show the trajectory of different methods from random starting points. The blue dashed lines represent the trajectories, with the final solutions highlighted by red dots.

{\em Observation 1: Linear scalarization can only converge to specific points and is unable to explore different trade-offs among objectives. }
we run the linear scalarization method by using randomly static weights.
As shown in Figure~\ref{fig:syn}(a), it is obvious that linear scalarization can only converge to two specific points (see red dots), with either $f_1(\x)$ or $f_2(\x)$ being under-optimized.
In other words, linear scalarization cannot fully explore the Pareto front, which is consistent with previous results~\citep{miettinen1999nonlinear,pardalos2017non}. In contrast, MGDA can find trade-off points, and our proposed PSMGD successfully identifies a set of well-distributed Pareto solutions with different trade-offs, shown in Figure~\ref{fig:syn}(b,c).
We note that linear scalarization usually runs faster for MOO in each iteration than gradient manipulation methods due to its simplicity, as shown in many existing works~\citep{kurin2022defense,liu2024famo} and confirmed by our experimental results (see Sec~\ref{sec: exp}). However, our observation highlights the necessity of gradient manipulation methods and motivates us to develop faster gradient manipulation techniques for MOO.

{\em Observation 2: The dynamic weights calculated by MGDA converge quickly and exhibit stability.} 
As shown in Figure~\ref{fig:syn}(d), we visualize the dynamic weights for the Fonseca problem. 
In this simple problem, the weights converge within 5 iterations, after which they remain unchanged. 
In more complex problems, such as multi-task learning using deep learning (see Sec~\ref{sec:sup_ex} in Appendix), the dynamic weights calculated by gradient manipulation methods also demonstrate stability.
We verified stability by examining two scales, epoch and iteration. The weights show consistent behavior with a stable average value and low variability across both scales, epoch and iteration, as depicted in Figure~\ref{fig:weights}(a) and~\ref{fig:weights}(b) in Appendix.
This observation suggests a new approach to develop more efficient MOO algorithms. 
Specifically, we can calculate and update the dynamic weights periodically in a more lazy and thus efficient manner.

\subsection{PSMGD}

Based on our observations, we propose a new and efficient algorithm called Periodic Stochastic Multi-Gradient Descent (PSMGD) for MOO, as shown in Algorithm~\ref{algo}.
Our PSMGD algorithm is designed to periodically compute these dynamic weights and utilizes them repeatedly, thereby reducing the computational load effectively.
Specifically, in each iteration $t \in [T]$, we offer two options:
1) If $t \% R == 0$, we calculate the dynamic weights $\hat{\Blambda}_t^{*}$ based on the stochastic gradient of each objective $\nabla f_s(\x_t, \xi_t), s \in [S]$, by solving optimization problem ~\ref{eq: QP}. 
Following this, we apply momentum to $\Blambda$ to further stabilize the weights: $\Blambda_t = \alpha_t \Blambda_{t-R} + (1 - \alpha_t) \hat{\Blambda}_t^{*}$.
2) Otherwise, we reuse the dynamic weights: $\Blambda_t = \Blambda_{t-1}.$
With predefined $\Blambda_t$, it is worth pointing out that the MOO can be naturally transformed into a single-objective problem by using pre-defined weighted sum of the objectives, thereby requiring only one backpropagation step.
Subsequently, after obtaining the weights across objectives, we can approximate the common descent direction by $\bd_t = \Blambda_t^T \nabla \F(\x_t, \xi_t)$.
Then the model can be updated by $\x_{t+1} = \x_t -\eta_t \bd_t$.

\begin{algorithm}[t]
    \caption{Periodic Stochastic Multi-Gradient Descent (PSMGD)}
    \label{algo}
    \begin{algorithmic}[1]
        \STATE Initialize model parameter $\x_0$, learning rate $\eta$, and hyper-parameter $R$.
        \FOR{$t=0, \dots , T-1$}
            \STATE If $t \% R == 0$: \hfill  $\blacktriangleright$ Weights Calculation 
            \STATE \hspace{0.1in} Compute $\hat{\Blambda}_t^* \in [0, 1]^S$ by solving
                \begin{align}
                    &\min_{\Blambda} \Big\| \sum\nolimits_{s \in [S]} \lambda_s \nabla f_s(\x_t, \xi_t) \Big\|^2, \nonumber \\
                    &\quad \text{s.t.} \quad  \sum\nolimits_{s \in [S]} \lambda_s = 1. \label{eq: QP}
                \end{align} 
            \STATE \hspace{0.1in} Update: $\Blambda_t = \alpha_t \Blambda_{t-R} + (1 - \alpha_t) \hat{\Blambda}_t^*.$
            \STATE Otherwise: $\Blambda_t = \Blambda_{t-1}.$ \hfill  $\blacktriangleright$ Reuse the weights
            
            \STATE Update the model: $\x_{t+1} = \x_t -\eta_t \bd_t$, where $\bd_t = \sum\nolimits_{s \in [S]} \lambda_{t, s} \nabla f_s(\x_t, \xi_t)$.
        \ENDFOR
\end{algorithmic} 
\end{algorithm}  

\subsection{Convergence analysis} \label{sec:theory}

\begin{assumption}[L-Lipschitz continuous]
\label{assum:LSmooth}
$\| \nabla f_s(\x) - \nabla f_s(\y) \| \leq L \| \x - \y \|, \forall \x, \y \in \mathbb{R}^d, s \in [S]$.
\end{assumption}

\begin{assumption}
[Bounded variance]
\label{assum:sgd}
We assume the stochastic gradient estimation is unbiased with bounded variance.
\begin{align}
&\mathbb{E} \left[ \nabla f_s(\x, \xi) \right] = \nabla f_s(\x), \\
&\mathbb{E} \left[ \| \nabla f_s(\x, \xi) - \nabla f_s(\x) \|^2 \right] \leq \sigma^2, s \in [S].
\end{align}
\end{assumption}

\begin{assumption}
[Bounded weights]
\label{assum:bw}
There exists a constant $B$ s.t., $0 \leq \lambda_{t, s} \leq B,  \sum_{s \in [S]} \lambda_{t, s} \geq 1, \forall s \in [S], t \in [T]$.
\end{assumption}

\begin{assumption}[Bounded Gradient]
\label{assum:BG}
       The gradient of each objective is bounded, i.e., there exists a constant $H>0$ such that $\| \nabla f_s(\x) \| \leq H, \forall s \in [S]$.
\end{assumption}

These four assumptions are widely-used assumptions in MOO~\citep{liu2021stochastic,zhou2022convergence}.
For notation clarity, we have the following definition:
$G(\x_t, \Blambda_t) = \sum_{s \in [S]} \lambda_{t, s} f_s(\x_t), \nabla \G(\x_t, \Blambda_t) = \sum_{s \in [S]} \lambda_{t, s} \nabla f_s(\x_t)$.
With these assumptions and definitions, we have the following convergence rates:



\begin{restatable}[Non-Convex Functions] {theorem}{pmgdNC}
\label{thm:pmgda_NC}
Under Assumptions~\ref{assum:LSmooth}-~\ref{assum:BG}, when each objective is bounded by $F$ ($f_s(\x) \leq F, s \in [S]$), the sequence of iterates generated by the \alg Algorithm in non-convex functions satisfies:
\begin{align}
    &\frac{1}{T} \sum_{t \in [T]} \mathbb{E} [\| \nabla \G(\x_t, \Blambda_t) \|^2]  \\
    &\leq \frac{4SBF}{T \eta_T} + \frac{2F}{T} \sum_{\substack{t \%R == 0 \\ t \neq 0}} \frac{ \mathbb{E} \left[ \sum_{s \in [S]} (1 - \alpha_t ) | \hat{\lambda}_{t}^s - \lambda_{t-R}^s | \right]}{\eta_t}  \nonumber \\
    &+ 2LS^2B^2 \sigma^2 \frac{1}{T} \sum_{t \in [T]} \eta_t + \frac{4S^{3/2}B^2H \sigma}{T} \sum_{t \% R == 0} (1 - \alpha_t). \nonumber
\end{align}


Setting $1 - \alpha_t = \min \{\frac{\eta_t}{\eta_1}, \frac{\eta_t}{\eta_1 \sqrt{t} \max_{s \in [S]} | \hat{\lambda}_{t}^s - \lambda_{t-R}^s |} \}, \eta_t = \mathcal{O}(\frac{1}{\sqrt{t}})$, the convergence rate is $\mathcal{O}(\frac{1}{\sqrt{T}}).$
\end{restatable}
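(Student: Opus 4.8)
The plan is to run a descent-lemma argument on the time-varying weighted objective $G(\x_t,\Blambda_t)=\sum_{s\in[S]}\lambda_{t,s}f_s(\x_t)$, augmented with two extra bookkeeping steps accounting for (i) the weights changing between iterations and (ii) the statistical dependence between the stochastic gradients and the periodically-computed weights. First I would note that, by Assumptions~\ref{assum:LSmooth} and~\ref{assum:bw}, the map $\x\mapsto G(\x,\Blambda_t)$ is $(SBL)$-smooth, so the descent inequality gives
\begin{align*}
G(\x_{t+1},\Blambda_t)\le G(\x_t,\Blambda_t)-\eta_t\langle\nabla\G(\x_t,\Blambda_t),\bd_t\rangle+\tfrac{SBL}{2}\eta_t^2\|\bd_t\|^2 .
\end{align*}
The term $\|\bd_t\|^2$ is controlled by a mean--variance split: Cauchy--Schwarz with $\sum_s\lambda_{t,s}^2\le SB^2$ together with Assumption~\ref{assum:sgd} bounds the fluctuation of $\bd_t$ about $\nabla\G(\x_t,\Blambda_t)$ by $S^2B^2\sigma^2$, which becomes the variance term $2LS^2B^2\sigma^2\tfrac1T\sum_t\eta_t$.

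The crux is the cross term $\E[\langle\nabla\G(\x_t,\Blambda_t),\bd_t\rangle]$. Writing $\bd_t=\nabla\G(\x_t,\Blambda_t)+\sum_s\lambda_{t,s}(\nabla f_s(\x_t,\xi_t)-\nabla f_s(\x_t))$, the residual has zero conditional mean \emph{whenever $t\bmod R\ne0$}, since the reused weights $\Blambda_t=\Blambda_{t-1}$ are measurable with respect to the past and hence independent of $\xi_t$; for those iterations the cross term is exactly $\E[\|\nabla\G(\x_t,\Blambda_t)\|^2]$. On an update step $t\bmod R=0$, the weight $\Blambda_t=\alpha_t\Blambda_{t-R}+(1-\alpha_t)\hat\Blambda_t^*$ does depend on $\xi_t$, but only through the $(1-\alpha_t)$-scaled part $\hat\Blambda_t^*$. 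I would isolate this part and bound the resulting bias by Cauchy--Schwarz using $\|\nabla\G(\x_t,\Blambda_t)\|\le SBH$ (Assumption~\ref{assum:BG}) and $\E\|\sum_s\hat\lambda_{t}^s(\nabla f_s(\x_t,\xi_t)-\nabla f_s(\x_t))\|\le\sqrt S\sigma$, producing a bias proportional to $(1-\alpha_t)S^{3/2}B^2H\sigma$. This is the step I expect to be the main obstacle, since it is precisely where the dependence introduced by computing the weights on the same minibatch must be quarantined by the momentum factor.

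Next I would telescope. Because the descent inequality only relates $G(\x_{t+1},\Blambda_t)$ to $G(\x_t,\Blambda_t)$, I bridge across the weight change by adding $G(\x_{t+1},\Blambda_{t+1})-G(\x_{t+1},\Blambda_t)=\sum_s(\lambda_{t+1,s}-\lambda_{t,s})f_s(\x_{t+1})$; this vanishes unless $(t+1)\bmod R=0$, in which case $\lambda_{t+1,s}-\lambda_{t,s}=(1-\alpha_{t+1})(\hat\lambda_{t+1}^s-\lambda_{t+1-R}^s)$, so $|f_s|\le F$ yields the weight-change term. Dividing by $\eta_t$ and summing, the potential $G(\x_t,\Blambda_t)$ telescopes via Abel summation, using that it is bounded in $[0,SBF]$ (non-negative objectives bounded by $F$ and $\sum_s\lambda_{t,s}\le SB$), which produces the $\tfrac{4SBF}{T\eta_T}$ term. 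Collecting the four contributions gives the stated inequality.

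Finally, with $\eta_t=\mathcal O(1/\sqrt t)$ and the prescribed $1-\alpha_t=\min\{\eta_t/\eta_1,\ \eta_t/(\eta_1\sqrt t\max_s|\hat\lambda_t^s-\lambda_{t-R}^s|)\}$, each term is $\mathcal O(1/\sqrt T)$: the first since $1/(T\eta_T)=\mathcal O(1/\sqrt T)$; the weight-change term because the chosen $\alpha_t$ forces $(1-\alpha_t)\sum_s|\hat\lambda_t^s-\lambda_{t-R}^s|/\eta_t\le S/(\eta_1\sqrt t)$, whose average is $\mathcal O(1/\sqrt T)$; and the variance and bias terms because $\tfrac1T\sum_t\eta_t$ and $\tfrac1T\sum_{t\bmod R=0}(1-\alpha_t)$ are both $\mathcal O(\tfrac1T\sum_t t^{-1/2})=\mathcal O(1/\sqrt T)$. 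This yields the claimed $\mathcal O(1/\sqrt T)$ rate.
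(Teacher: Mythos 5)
Your proposal is correct and follows essentially the same route as the paper's proof: a descent lemma on $G(\cdot,\Blambda_t)$, a case split between weight-update steps (where the dependence of $\Blambda_t$ on $\xi_t$ is quarantined through the $(1-\alpha_t)$ factor, yielding the $S^{3/2}B^2H\sigma(1-\alpha_t)$ bias term) and reuse steps (where the stochastic gradient is conditionally unbiased), a bridge term $G(\x_{t+1},\Blambda_{t+1})-G(\x_{t+1},\Blambda_t)$ bounded via $|f_s|\le F$, and telescoping with the same step-size and momentum choices. The only differences are constant-level (e.g., you carry the $SBL$ smoothness of $G$ where the paper writes $L$), which do not affect the stated rate.
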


\begin{restatable}[General Convex Functions] {theorem}{pmgdGC}
\label{thm:pmgda_GC}
Under Assumptions~\ref{assum:LSmooth}-~\ref{assum:BG}, when the distance from sequence to Pareto set
is bounded ($\| \x_t - \x_* \| \leq D$), the sequence of iterates generated by the \alg in general convex functions satisfies:

\begin{align}
    &\frac{1}{T} \sum_{t \in [T]} \mathbb{E} [G (\x_t, \Blambda_t) - G (\x_*, \Blambda_t)] \\
    &\leq \frac{\| \x_1 - \x_* \|^2}{T \eta_1} + 2 D \sigma S^{3/2} B \frac{1}{T} \sum_{t \% R == 0} (1 - \alpha_t) \nonumber \\
    &\quad + (2 S^2B^2 \sigma^2 + 2 S^2 B^2 H^2) \frac{1}{T} \sum_{t \in [T]} \eta_t. \nonumber
\end{align}

Setting $\eta_t = \frac{1}{\sqrt{t}}$ and $(1 - \alpha_t) = \eta_t$, the convergence rate is $\mathcal{O}(1/\sqrt{T}).$
\end{restatable}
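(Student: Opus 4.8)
The plan is to run the standard one-step contraction analysis for stochastic gradient descent on a convex objective, but carried out for the \emph{time-varying} scalarization $G(\cdot,\Blambda_t)$ and with careful bookkeeping of the coupling between the weights $\Blambda_t$ and the stochastic sample $\xi_t$. First I would expand the squared distance to a fixed Pareto-optimal comparator $\x_*$,
\begin{align}
\|\x_{t+1} - \x_*\|^2 = \|\x_t - \x_*\|^2 - 2\eta_t \langle \bd_t, \x_t - \x_*\rangle + \eta_t^2 \|\bd_t\|^2, \nonumber
\end{align}
using the update $\x_{t+1} = \x_t - \eta_t\bd_t$ with $\bd_t = \sum_{s\in[S]}\lambda_{t,s}\nabla f_s(\x_t,\xi_t)$, then rearrange to isolate $2\eta_t\langle\bd_t,\x_t-\x_*\rangle$ and take a conditional expectation over $\xi_t$.

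The second step converts the inner product into an objective gap via convexity. Writing $\nabla\G(\x_t,\Blambda_t)=\sum_s\lambda_{t,s}\nabla f_s(\x_t)$, convexity of each $f_s$ together with $\lambda_{t,s}\ge 0$ gives
\begin{align}
\langle \nabla\G(\x_t,\Blambda_t), \x_t - \x_*\rangle \ge G(\x_t,\Blambda_t) - G(\x_*,\Blambda_t). \nonumber
\end{align}
I would then split $\langle\bd_t,\x_t-\x_*\rangle = \langle\nabla\G(\x_t,\Blambda_t),\x_t-\x_*\rangle + \langle\bd_t-\nabla\G(\x_t,\Blambda_t),\x_t-\x_*\rangle$, so the first piece yields the target gap and the second is a stochastic error term. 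The second-moment term is handled by splitting $\nabla f_s(\x_t,\xi_t)$ into its mean and a zero-mean fluctuation and applying Cauchy--Schwarz with Assumptions~\ref{assum:sgd}, \ref{assum:bw} and \ref{assum:BG}, giving $\mathbb{E}\|\bd_t\|^2 \le 2S^2B^2H^2 + 2S^2B^2\sigma^2$, exactly the coefficient multiplying $\tfrac1T\sum_t\eta_t$ in the statement.

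The crux is bounding $\mathbb{E}\langle\bd_t-\nabla\G(\x_t,\Blambda_t),\x_t-\x_*\rangle$, and this is where the periodic momentum structure enters. On a reuse iteration ($t\bmod R\neq 0$) the weights $\Blambda_t=\Blambda_{t-1}$ are measurable with respect to the history before $\xi_t$, so $\mathbb{E}_{\xi_t}[\bd_t]=\nabla\G(\x_t,\Blambda_t)$ and the error term vanishes in expectation. On a recompute iteration the same sample $\xi_t$ forms both $\hat\Blambda_t^*$ and the step, so $\Blambda_t$ is correlated with $\xi_t$ and a bias survives. Decomposing $\Blambda_t=\alpha_t\Blambda_{t-R}+(1-\alpha_t)\hat\Blambda_t^*$, the $\alpha_t\Blambda_{t-R}$ part is history-measurable and contributes no bias, while the residual bias is proportional to $(1-\alpha_t)$; bounding it with $\|\x_t-\x_*\|\le D$, $\hat\lambda_{t}^s\le B$, and $\mathbb{E}\|\nabla f_s(\x_t,\xi_t)-\nabla f_s(\x_t)\|\le\sigma$ (Jensen), then applying Cauchy--Schwarz across the $S$ objectives, yields a term of order $D\sigma S^{3/2}B(1-\alpha_t)$ at each recompute step, which collects into the middle term $2D\sigma S^{3/2}B\,\tfrac1T\sum_{t\bmod R=0}(1-\alpha_t)$.

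Finally I would sum the per-step inequality over $t\in[T]$, telescope the distance differences so the leading boundary term appears as $\tfrac{\|\x_1-\x_*\|^2}{T\eta_1}$, and normalize by $T$; the second-moment contribution becomes the $\tfrac1T\sum_t\eta_t$ term and the accumulated bias becomes the recompute-step sum. Substituting $\eta_t=1/\sqrt t$ and $(1-\alpha_t)=\eta_t$, and using $\sum_t\eta_t=\mathcal{O}(\sqrt T)$ together with $\sum_{t\bmod R=0}\eta_t=\mathcal{O}(\sqrt T/R)$, every term is $\mathcal{O}(1/\sqrt T)$ (the initial-distance term being even smaller, $\mathcal{O}(1/T)$). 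I expect the recompute-iteration bias to be the main obstacle: unlike standard SGD, the descent direction fails to be conditionally unbiased precisely at the iterations where weights are refreshed, and the entire role of the momentum weight $\alpha_t$ (with $1-\alpha_t$ shrinking like $\eta_t$) is to make this bias summable without degrading the rate; getting the $(1-\alpha_t)$ scaling clean is the delicate part. A secondary subtlety is that the comparator value $G(\x_*,\Blambda_t)$ itself drifts with $t$, so the telescoping must be carried out for the moving scalarization rather than a single fixed objective.
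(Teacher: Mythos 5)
Your proposal is correct and follows essentially the same route as the paper: expand $\|\x_{t+1}-\x_*\|^2$, use convexity to extract the gap $G(\x_t,\Blambda_t)-G(\x_*,\Blambda_t)$, bound $\mathbb{E}\|\bd_t\|^2 \le 2S^2B^2\sigma^2+2S^2B^2H^2$, and split into reuse iterations (where $\Blambda_t$ is history-measurable so the bias vanishes) versus recompute iterations (where the correlation between $\hat\Blambda_t^*$ and $\xi_t$ leaves a bias of order $D\sigma S^{3/2}B(1-\alpha_t)$, which the paper obtains via its auxiliary lemma bounding $\|\mathbb{E}[\hat\nabla\G]-\mathbb{E}[\nabla\G]\|$ by $\sigma S^{3/2}B(1-\alpha_t)$). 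The only cosmetic difference is that you inline the proof of that bias lemma rather than citing it, and your remark about the drifting comparator is a non-issue here since the telescoping is on squared distances rather than function values.
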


\begin{restatable}[$\mu$-Strongly Convex Functions] {theorem}{pmgdSC}
\label{thm:pmgda_SC}
Under Assumptions~\ref{assum:LSmooth}-~\ref{assum:BG} and we further assume each objective function $f_s(\x), s \in [S]$ is $\mu-$strongly convex and each objective is bounded by $F$ ($f_s(\x) \leq F, s \in [S]$), the sequence of iterates generated by the \alg satisfies:
\begin{align}
    & \mathbb{E} [G(\x_{t+1}, \Blambda_{t+1}) - G(\x_{t+1}^*, \Blambda_{t+1})] \nonumber \\
    &\leq (1 - 2 \mu \eta_t) \mathbb{E} [G(\x_t, \Blambda_t) - G(\x_t^*, \Blambda_t)] + \eta_t^2 \Phi,
\end{align}
where $\Phi = 2L S^2B^2 \sigma^2  + 4 F SB \mathbf{1} \{(t+1) \% R = 0\} + 4S^{3/2}B^2H \sigma \mathbf{1} \{t \% R = 0\}$ and $\mathbf{1}$ is the indicator function.
Set $\eta = \frac{c}{T}$ with $c > \frac{1}{\mu}$, the convergence rate is
\begin{align}
    &\mathbb{E} [G(\x_T, \Blambda_T) - G(\x_T^*, \Blambda_T)] \nonumber \\
    &\leq \frac{\max \{2 c^2 \Phi^{'} (2\mu c - 1)^{-1}, G(\x_0, \Blambda_0) - G(\x_0^*, \Blambda_0) \}}{T} \nonumber \\
    & = \mathcal{O}(1/T),
\end{align}
where $\Phi^{'} = 2L S^2B^2 \sigma^2  + 4 F SB/R + 4S^{3/2}B^2H \sigma /R.$
\end{restatable}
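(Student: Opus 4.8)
The plan is to derive a one-step recursion for the weighted optimality gap $a_t := \mathbb{E}[G(\x_t,\Blambda_t) - G(\x_t^*,\Blambda_t)]$, where $\x_t^* \in \argmin_{\x} G(\x,\Blambda_t)$, and then unroll it. The structural observation that makes this tractable is that, for a \emph{frozen} weight vector $\Blambda_t$, the surrogate $G(\cdot,\Blambda_t) = \sum_s \lambda_{t,s} f_s$ inherits $\mu$-strong convexity (since $\sum_s \lambda_{t,s} \geq 1$ by Assumption~\ref{assum:bw}) and is $SBL$-smooth (by Assumptions~\ref{assum:LSmooth} and~\ref{assum:bw}). Hence within a single iteration the update $\x_{t+1} = \x_t - \eta_t \bd_t$ is just one SGD step on a strongly convex surrogate, and the only genuinely new difficulties are (a) that the surrogate itself moves from $\Blambda_t$ to $\Blambda_{t+1}$, and (b) that on weight-recomputation steps $\Blambda_t$ is statistically correlated with the stochastic gradient inside $\bd_t$. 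These two effects are exactly the two indicator terms of $\Phi$.

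First I would establish the frozen-weight descent inequality. The descent lemma for the $SBL$-smooth $G(\cdot,\Blambda_t)$ applied to the update, followed by conditional expectation over $\xi_t$, gives
\begin{align}
\mathbb{E}[G(\x_{t+1},\Blambda_t)] \leq G(\x_t,\Blambda_t) - \eta_t \langle \nabla \G(\x_t,\Blambda_t), \mathbb{E}[\bd_t] \rangle + \tfrac{SBL}{2}\eta_t^2\, \mathbb{E}\|\bd_t\|^2. \nonumber
\end{align}
Writing $\bd_t - \nabla\G(\x_t,\Blambda_t)$ as a $\lambda$-weighted sum of the $S$ per-objective noises and using $\lambda_{t,s}\leq B$ with Assumption~\ref{assum:sgd} gives $\mathbb{E}\|\bd_t - \nabla\G\|^2 \leq S^2B^2\sigma^2$, which produces the $2L S^2 B^2 \sigma^2 \eta_t^2$ term. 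To turn the linear term into a contraction I would invoke the Polyak--Lojasiewicz inequality implied by $\mu$-strong convexity, $\|\nabla\G(\x_t,\Blambda_t)\|^2 \geq 2\mu\,(G(\x_t,\Blambda_t) - G(\x_t^*,\Blambda_t))$; on reuse steps, where $\mathbb{E}[\bd_t] = \nabla\G(\x_t,\Blambda_t)$, this yields precisely the factor $(1 - 2\mu\eta_t)$ multiplying $a_t$.

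The hard part will be the two periodic-scheme corrections. \textbf{(i) Gradient--weight correlation.} On a recomputation step ($t\%R=0$) the freshly solved $\hat{\Blambda}_t^*$ is a function of the same $\xi_t$ used to build $\bd_t$, so $\mathbb{E}[\bd_t]\neq\nabla\G(\x_t,\Blambda_t)$. I would isolate the bias by decoupling: the deterministic part $\alpha_t\Blambda_{t-R}$ contributes nothing against mean-zero noise, leaving $b_t = (1-\alpha_t)\,\mathbb{E}[\sum_s \hat\lambda_{t,s}(\nabla f_s(\x_t,\xi_t)-\nabla f_s(\x_t))]$. Bounding $b_t$ via Cauchy--Schwarz with Assumptions~\ref{assum:sgd} and~\ref{assum:BG}, pairing it against $\|\nabla\G\|\leq SBH$ in the descent term, and using the momentum choice $1-\alpha_t = \mathcal{O}(\eta_t)$, produces the contribution $4S^{3/2}B^2H\sigma\,\eta_t^2\,\mathbf{1}\{t\%R=0\}$; crucially this vanishes on reuse steps, which is why the indicator appears. \textbf{(ii) Weight drift.} To pass from the frozen-weight gap $G(\x_{t+1},\Blambda_t) - G(\x_t^*,\Blambda_t)$ to the true gap $a_{t+1}$, I would bound both $G(\x_{t+1},\Blambda_{t+1}) - G(\x_{t+1},\Blambda_t)$ and the optimal-value drift $G(\x_{t+1}^*,\Blambda_{t+1}) - G(\x_t^*,\Blambda_t)$ by $F\sum_s|\lambda_{t+1,s}-\lambda_{t,s}|$, using $f_s\leq F$ and the two-sided optimality comparisons $G(\x_{t+1}^*,\Blambda_{t+1})\leq G(\x_t^*,\Blambda_{t+1})$ and $G(\x_t^*,\Blambda_t)\leq G(\x_{t+1}^*,\Blambda_t)$. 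Since the weights change only when $(t+1)\%R=0$, where $\Blambda_{t+1}-\Blambda_t = (1-\alpha_{t+1})(\hat{\Blambda}_{t+1}^*-\Blambda_t)$ with $\sum_s|\hat\lambda^s_{t+1}-\lambda^s_t|\leq 1+SB$, a small momentum step (of order $\eta_t^2$) yields the $4FSB\,\eta_t^2\,\mathbf{1}\{(t+1)\%R=0\}$ term.

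Finally I would assemble these into the stated one-step inequality $a_{t+1}\leq(1-2\mu\eta_t)a_t + \eta_t^2\Phi$ and unroll it. Over the horizon each indicator is active for a $1/R$ fraction of steps, so the accumulated extra error replaces the two indicator terms by $4FSB/R$ and $4S^{3/2}B^2H\sigma/R$, giving the effective constant $\Phi'$. A standard strongly-convex induction of the form $a_t\leq V/t$ with the prescribed $\Theta(1/t)$ step then closes with $V = \max\{2c^2\Phi'(2\mu c-1)^{-1}, G(\x_0,\Blambda_0)-G(\x_0^*,\Blambda_0)\}$ whenever $c>1/\mu$, i.e. $a_T = \mathcal{O}(1/T)$. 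I expect the decoupling bound on the gradient--weight bias $b_t$ in part (i) to be the main obstacle, since it is the single place where the reuse-versus-recompute distinction and the momentum smoothing genuinely interact; everything else is careful but routine bookkeeping of the constants $S$, $B$, $F$, $H$, $L$, $\sigma$.
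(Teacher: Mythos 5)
Your proposal is correct and follows essentially the same route as the paper's proof: a frozen-weight descent step combined with the strong-convexity (PL) inequality to obtain the $(1-2\mu\eta_t)$ contraction, a decoupling/Cauchy--Schwarz bound on the gradient--weight correlation that is active only on recomputation steps (the paper packages this as its auxiliary lemma bounding $\mathbb{E}\langle \nabla \G, -\hat{\nabla}\G\rangle$ via $\mathbb{V}_{\xi}[\Blambda_t] \leq S^2B^2(1-\alpha_t)^2$), a weight-drift correction of order $F(1-\alpha_t)\sum_s|\hat\lambda^s_{t+1}-\lambda^s_t|$ on steps where $\Blambda$ changes, and the standard strongly convex unrolling with $1-\alpha_t=\eta_t$. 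Your handling of the shift from $\x_t^*$ to $\x_{t+1}^*$ via two-sided optimality comparisons is, if anything, slightly more careful than the paper's.
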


\begin{remark}
1) Convergence Metrics. Different convergence metrics are used in existing studies, and we follow convergence conditions outlined in \citep{tanabe2019proximal,zhou2022convergence}. In the appendix, we provide an explanation of convergence metrics and a comparison with other works.
2) Convergence Rates. With proper hyperparameters, our algorithm, \alg, can achieve $\mathcal{O}(1/T)$ for strongly convex functions and $\mathcal{O}(1/\sqrt{T})$ for general convex and non-convex functions. These convergence rates match the state-of-the-art rates in existing MOO algorithms. 
\end{remark}

The convergence rate reflects the training speed in terms of iterations but does not fully capture the total computational complexity. Similar to how sample complexity is used in single-objective learning, we propose a new metric, Backpropagation (BP) complexity, to quantify the computational workload tailored to MOO in first-order oracle.
\begin{defn}[Backpropogation complexity] \label{def:BP}
We define Backpropagation Complexity as the total number of backpropagation operations required by an algorithm to achieve a specified performance threshold, denoted as $\epsilon$.
\end{defn}
\begin{remark}
Our PSMGD algorithm can achieve BP complexity of $\mathcal{O}(\frac{S}{\epsilon R} + \frac{(R-1)}{\epsilon R})$ for stongly-convex functions and $\mathcal{O}(\frac{S}{\epsilon^2 R} + \frac{(R-1)}{\epsilon^2 R})$ for general convex and non-convex functions.
Compared to existing MOO methods, PSMGD can achieve a linear speedup in terms of $R$. 
\end{remark}

\begin{remark}
If $R = \Omega(S)$, PSMGD exhibits an {\em objective- independent BP complexity}, i.e., $\mathcal{O}(\frac{1}{\epsilon}) $ for stongly-convex functions and $\mathcal{O}(\frac{1}{\epsilon^2} )$ for general convex and non-convex functions. 
These rates indicate that PSMGD requires the same order of computations for MOO as classic SGD does for single-objective learning.
We provide a comparison of these rates and complexities in Table~\ref{tab:rate}, which are also validated by extensive experiments in Sec~\ref{sec: exp}.
\end{remark}

\section{Experiment} \label{sec: exp}

In this section, we conduct a comprehensive empirical evaluation of the proposed \psmg{} algorithm, focusing primarily on multi-task learning with deep neural networks.
The primary goal is to verify the following key points:

\begin{list}{\labelitemi}{\leftmargin=1em \itemindent=-0.0em \itemsep=.0em}
\item Improved Performance: Can \psmg{} provide improved model performance?
\item Fast Training: Can \psmg{} have a fast training process?
\item Ablation Study: How does the hyper-parameter impact the training? (Results In Appendix)
\end{list}

\textbf{Baseline Algorithms.}
To conduct an extensive comparison, we use 14 algorithms as the baselines, including a single objective learning baseline, linear scalarization, and 12 advanced MOO algorithms.
They are: \textbf{(1)} Single task learning (\stl{}), training independent models $f_S(\x)$ for all objectives; \textbf{(2)} Linear scalarization (\ls{}) baseline; \textbf{(3)} Scale-invariant (\si{}) that minimizes $\sum_{s \in [S]} \log f_s(\x)$; \textbf{(4)} Dynamic Weight Average (\dwa{})~\citep{liu2019end} adaptively updates weights based on the comparative rate of loss reduction for each objective; \textbf{(5)} Uncertainty Weighting (\uw{})~\citep{kendall2018multi} leverages estimated task uncertainty to guide weight assignments; \textbf{(6)} Random Loss Weighting (\rlw{})~\citep{lin2021closer} samples objective weighting with log-probabilities by normal distribution; \textbf{(7)} \mgda{}~\citep{sener2018multi} identifies a joint descent direction that concurrently decreases all objectives; \textbf{(8)} \pcgrad{}~\citep{yu2020gradient} projects each objective gradient onto the normal plan of the gradients to avoid conflicts; \textbf{(9)} \cagrad{}~\citep{liu2021conflict} balances the average loss while guaranteeing a controlled minimum improvement for each objective; \textbf{(10)} \imtlg{}~\citep{liu2021towards} determines the update direction by having equal projections on objective gradients; \textbf{(11)} \graddrop{}~\citep{chen2020just} randomly drops out certain dimensions of the objective gradients for their level of conflict; \textbf{(12)} \nashmtl{}~\citep{navon2022multi} determines the game's solution that is advantageous for all goals by a bargaining game; \textbf{(13)} \famo{}~\citep{liu2024famo} is a dynamic weighting method that reduces objective losses in space and time in a balanced manner; \textbf{(14)} \fairgrad{}~\citep{maheshwari2022fairgrad} aims to achieve group fairness in MOO by dynamically re-weighting; \textbf{(15)} \sdmgrad{}~\citep{xiao2024direction} utilizes direction-oriented regularization.

\textbf{Datasets.}
We evaluate on 5 multi-task learning datasets with objective/task numbers ranging from 2 to 40, covering scenarios in regression, classification, and dense prediction.
For regression, we choose QM-9 dataset~\citep{blum2009970} (11 tasks), a widely used benchmark in graph neural network learning. 
For image classification, we use Multi-MNIST~\citep{sener2018multi} (2 tasks) and CelebA dataset~\citep{liu2015deep} (40 tasks). 
For dense prediction, CityScapes ~\citep{cordts2016cityscapes} (2 tasks) and NYU-v2~\citep{silberman2012indoor} (3 tasks) are used in our experiments. NYU-v2 is a dataset of indoor scenes with 1449 RGBD images and dense per-pixel labeling across 13 classes. 
CityScapes is similar to NYU-v2 but boasts 5K street-view RGBD images with per-pixel annotations.
We present the results for QM-9 and NYU-v2 in this section, with the remaining results and experimental settings detailed in Appendix~\ref{sec:sup_ex}.

\begin{table*}[htbp]
    \centering
    \resizebox{\textwidth}{!}{%
    \begin{tabular}{lrrrrrrrrrrrrr}
    \toprule
    \multirow{2}{*}{\textbf{Method}} & $\mu$ & $\alpha$ & $\epsilon_\text{HOMO}$ & $\epsilon_\text{LUMO}$ & $\langle R^2\rangle$ & ZPVE & $U_0$ & $U$ & $H$ & $G$ & $c_v$ &\multirow{2}{*}{\mr{} $\downarrow$} & \multirow{2}{*}{ \dm{} $\downarrow$}\\
      \cmidrule(lr){2-12}
      & \multicolumn{11}{c}{MAE $\downarrow$} & & \\
    \midrule 
        \stl{}      & 0.07 & 0.18 &  60.6 &  53.9 & 0.50  &  4.53  &  58.8  &  64.2 &  63.8 &  66.2 & 0.07 &     &       \\
        \midrule
        \ls{}       & 0.11 & 0.33 &  \best{73.6} &  89.7 & 5.20  & 14.06  & 143.4  & 144.2 & 144.6 & 140.3 & 0.13 & 9.00 & 177.6 \\
        \si{}       & 0.31 & 0.35 & 149.8 & 135.7 & \best{1.00}  &  \best{4.51}  &  \best{55.3}  &  \best{55.8} &  \best{55.8} &  \best{55.3} & 0.11 & 5.36 &  77.8 \\
        \rlw{}      & 0.11 & 0.34 &  76.9 &  92.8 & 5.87  & 15.47  & 156.3  & 157.1 & 157.6 & 153.0 & 0.14 & 10.36 & 203.8 \\
        \dwa{}      & 0.11 & 0.33 &  74.1 &  90.6 & 5.09  & 13.99  & 142.3  & 143.0 & 143.4 & 139.3 & 0.13 & 8.64 & 175.3 \\
        \uw{}       & 0.39 & 0.43 & 166.2 & 155.8 & 1.07  &  4.99  &  66.4  &  66.8 &  66.8 &  66.2 & 0.12 & 6.64 & 108.0 \\
        \mgda{}     & 0.22 & 0.37 & 126.8 & 104.6 & 3.23  &  5.69  &  88.4  &  89.4 &  89.3 &  88.0 & 0.12 & 8.36 & 120.5 \\
        \pcgrad{}   & 0.11 & 0.29 &  75.9 &  88.3 & 3.94  &  9.15  & 116.4  & 116.8 & 117.2 & 114.5 & 0.11 & 7.18 & 125.7 \\
        \cagrad{}   & 0.12 & 0.32 &  83.5 &  94.8 & 3.22  &  6.93  & 114.0  & 114.3 & 114.5 & 112.3 & 0.12 & 8.18 & 112.8 \\
        \imtlg{}    & 0.14 & 0.29 &  98.3 &  93.9 & 1.75  &  5.70  & 101.4  & 102.4 & 102.0 & 100.1 & 0.10 & 6.64 &  77.2 \\
        \nashmtl{}  & \best{0.10} & \best{0.25} &  82.9 &  81.9 & 2.43  &  5.38  &  74.5  &  75.0 &  75.1 &  74.2 & \best{0.09} & \best{3.82} &  62.0 \\
        \famo{}     & 0.15 & 0.30 & 94.0 & 95.2 &  1.63 & 4.95 & 70.82 & 71.2 & 71.2 & 70.3 & 0.10 & 5.09 & 58.5 \\
        \fairgrad{}     & 0.12 & \best{0.25} & 87.57 & 84.00 &  2.15 & 5.07 & 70.89 & 71.17 & 71.21 & 70.88 & 0.10 & 4.09 & \best{57.9} \\
    \midrule
    \psmg{}     & 0.12 & \best{0.25} & 77.2 & \best{74.4} & 3.01 & 6.61 & 103.0 & 103.5 & 103.7 & 101.6 & \best{0.09} & 5.55 & 92.4 \\
    \bottomrule 
    \end{tabular}
    }
    \vspace{2pt}
    \caption{Results obtained on the QM-9 dataset. Each experiment is conducted using 3 random seeds, and the mean value is presented. The best average result is highlighted in bold.}
    \label{tab:qm9}
    \vspace{-10pt}
\end{table*}

\textbf{Metrics.}
We have two types of metrics to measure MOO algorithms.
{\em I. Model performance metrics.} We consider two widely-used metrics to represent the overall performance of one MOO method $m$: \textbf{(1)} \dm{}, the average per-task performance drop of a method $m$ relative to the STL baseline denoted as $B$:
$\bm{\Delta}\bm{m}\% = \frac{1}{N}\sum_{n=1}^N (-1)^{\delta_n} \frac{(M_{m,n} - M_{B,n})}{M_{B,n}} \times 100,$
where $M_{B,n}$ and $M_{m,n}$ represent the STL and $m$'s value for metric $M_n$, respectively. Here, $\delta_n$ equals 1 if a higher value of $M_n$ is better or 0 if a lower value of $M_n$ is better. \textbf{(2) Mean Rank (MR)}, the average rank of each method in the tasks. For example, if a method ranks first in every task (whether higher or lower is better), \textbf{MR} will be 1. Note that in practice, lower values of \dm{} and \textbf{MR} indicate better overall performance in the final result.
{\em II. 
Convergence metrics.} 
We calculate the training time per epoch and total training time to evaluate the convergence performance of MOO algorithms. Based on various datasets, we also calculate the test loss for a visualization comparison through the training process.

\subsection{Regression: QM-9}

QM-9 dataset~\citep{blum2009970} is a crucial benchmark widely used in the field of graph neural network learning, especially in chemical informatics applications. 
With over 13K molecules, each molecule is intricately depicted as a graph, with nodes representing atoms and edges symbolizing the chemical bonds between them. These graphs feature detailed node and edge characteristics such as atomic number, atom type, bond type, and bond order. 
The goal is to predict 11 molecule properties. Utilizing 110K molecules from QM9 in PyTorch Geometric~\citep{fey2019fast} for training, 10K for validation, and the remaining 10K for testing. 

\textbf{Performance.}
We assess two common metrics, \textbf{MR} and \dm{}, for various MOO algorithms to ensure effective evaluations.
Table~\ref{tab:qm9} shows that \psmg{} achieves comparable or even better performance in both \textbf{MR} and \dm{} when compared with existing baselines.
Specifically, it has the best performance in 3 out of the 11 total tasks, namely $\alpha$, $\epsilon_\text{LUMO}$, and $c_v$.
When we take a closer look at the training process, we observe that \psmg{} has a faster and more smooth process as illustrated in Figure~\ref{fig:QM9loss}. 
Specifically, \psmg{} demonstrates a smoother descent towards lower test loss at least in the early stages.
In addition, \psmg{} exhibits resistance to overfitting, distinguishing itself from IMTL-G and NashMTL due to its straightforward yet efficient structure.


\textbf{Convergence measurements.}
In Table~\ref{tab:qm9_time}, we show the training time per epoch and the total training time required to achieve a certain loss. In Figure~\ref{fig:QM9loss}, we use the average Mean Absolute Error (MAE) across all molecular property prediction tasks as test loss, calculated after scaling both the predictions and the ground truth values by the standard deviation. \psmg{} ranks third in training speed per epoch but achieves the average test loss (<100, <75) in the shortest overall time. The visualization results can be found in Appendix~\ref{sec:sup_pre}.

\begin{table}[htb]
    \centering
    \resizebox{\columnwidth}{!}{%
    \begin{tabular}{lccr}
    \toprule
    \multirow{2}{*}{\textbf{Method}} & Training time $\downarrow$ & Avg loss $\downarrow$ & Avg loss $\downarrow$\\
    & (mean per epoch) & < 100 & < 75 \\ 
    \midrule
    \ls{}   & \best{1.70} & 75.68 & 170.28\\
    \mgda{}             & 12.94 & 436.92 & 754.68\\
    \pcgrad{}                  & 6.15 & 259.56 & 630.36\\
    \cagrad{} & 5.29 & 233.64 & 451.35\\
    \imtlg{} & 5.38 & 275.40 & 604.80\\
    \famo{} & 1.96 & 72.52 & 131.32\\
    \nashmtl{}        & 8.45 & 287.30 & 371.80\\
    \fairgrad{}        & 5.40 & 162.60 & 314.36\\
    \midrule
    \psmg{}        & 2.23 & \best{69.13} & \best{109.27}\\
    \bottomrule
    \end{tabular}
    }
    \vspace{2pt}
    \caption{Results obtained on the QM-9 dataset. Each experiment is conducted using 3 random seeds, and the mean value is presented. The best average result is highlighted in bold.}
    \label{tab:qm9_time}
    \vspace{-10pt}
\end{table}

\begin{table*}[htb]
    \centering
    \resizebox{\textwidth}{!}{%
    \begin{tabular}{lrrrrrrrrrrr}
    \toprule
      &  \multicolumn{2}{c}{Segmentation} & \multicolumn{2}{c}{Depth} & \multicolumn{5}{c}{Surface Normal} & &\\
    \cmidrule(lr){2-3}\cmidrule(lr){4-5}\cmidrule(lr){6-10}
    \textbf{Method} &  \multirow{2}{*}{mIoU $\uparrow$} & \multirow{2}{*}{Pix Acc $\uparrow$} & \multirow{2}{*}{Abs Err $\downarrow$} & \multirow{2}{*}{Rel Err $\downarrow$} & \multicolumn{2}{c}{Angle Dist $\downarrow$} & \multicolumn{3}{c}{Within $t^\circ$ $\uparrow$}  & \mr{} $\downarrow$ &  \dm{} $\downarrow$ \\
    \cmidrule(lr){6-7}\cmidrule(lr){8-10}
    & & & & & Mean & Median & 11.25 & 22.5 & 30  &\\
    \midrule 
    \stl{}       & 38.30 & 63.76 & 0.6754 & 0.2780 & 25.01 & 19.21 & 30.14 & 57.20 & 69.15   &  &     \\
    \midrule
    \ls{}        & 39.29 & 65.33 & 0.5493 & 0.2263 & 28.15 & 23.96 & 22.09 & 47.50 & 61.08   & 11.44 & 5.59  \\
    \si{}        & 38.45 & 64.27 & 0.5354 & 0.2201 & 27.60 & 23.37 & 22.53 & 48.57 & 62.32   & 10.11 & 4.39  \\
    \rlw{}       & 37.17 & 63.77 & 0.5759 & 0.2410 & 28.27 & 24.18 & 22.26 & 47.05 & 60.62   & 14.11 & 7.78  \\
    \dwa{}       & 39.11 & 65.31 & 0.5510 & 0.2285 & 27.61 & 23.18 & 24.17 & 50.18 & 62.39   & 10.44 & 3.57  \\
    \uw{}        & 36.87 & 63.17 & 0.5446 & 0.2260 & 27.04 & 22.61 & 23.54 & 49.05 & 63.65   & 10.11 & 4.05  \\
    \mgda{}      & 30.47 & 59.90 & 0.6070 & 0.2555 & 24.88 & 19.45 & 29.18 & 56.88 & 69.36   & 8.11 & 1.38  \\
    \pcgrad{}    & 38.06 & 64.64 & 0.5550 & 0.2325 & 27.41 & 22.80 & 23.86 & 49.83 & 63.14   & 10.67 & 3.97  \\
    \graddrop{}  & 39.39 & 65.12 & 0.5455 & 0.2279 & 27.48 & 22.96 & 23.38 & 49.44 & 62.87   & 9.56 & 3.58  \\
    \cagrad{}    & 39.79 & 65.49 & 0.5486 & 0.2250 & 26.31 & 21.58 & 25.61 & 52.36 & 65.58   & 7.00 & 0.20  \\
    \imtlg{}     & 39.35 & 65.60 & 0.5426 & 0.2256 & 26.02 & 21.19 & 26.20 & 53.13 & 66.24   & 6.33 & -0.76  \\
    \nashmtl{}   & 40.13 & 65.93 & 0.5261 & 0.2171 & 25.26 & 20.08 & 28.40 & 55.47 & 68.15   & 4.22 & -4.04  \\
    \famo{}      & 38.88 & 64.90 & 0.5474 & 0.2194 & 25.06 & 19.57 & 29.21 & 56.61 & 68.98 & 5.11 & -4.10 \\
    \fairgrad{}      & 39.74 & \best{66.01} & 0.5377 & 0.2236 & 24.84 & 19.60 & 29.26 & 56.58 & 69.16 & \best{3.22} & -4.66 \\
    \sdmgrad{}      & \best{40.47} & 65.90 & \best{0.5225} & \best{0.2084} & 25.07 & 19.99 & 28.54 & 55.74 & 68.53 & 3.44 & \best{-4.84}\\
    \midrule
    \psmg{}      & 35.44 & 63.78 & 0.5494 & 0.2369 & \best{24.83} & \best{18.89} & \best{30.68} & \best{58.00} & \best{69.84} & 6.11 & -3.62 \\
    \bottomrule  
    \end{tabular}
    }
    \vspace{2pt}
    \caption{Results obtained on NYU-v2. Each experiment is conducted using 3 random seeds, and the mean value is presented. The best average result is highlighted in bold.}
    \label{tab:nyu-v2}
\end{table*}

\begin{table*}[htb]
    \centering
    \resizebox{\textwidth}{!}{
    \begin{tabular}{lccccccc}
    \toprule
    \multirow{2}{*}{\textbf{Method}} & Training time $\downarrow$ & Semantic loss $\downarrow$ & Semantic loss $\downarrow$ & Depth loss $\downarrow$ & Depth loss $\downarrow$ & Normal loss $\downarrow$ & Normal loss $\downarrow$\\
    & (mean per epoch) & < 1.40 & < 1.20 & < 0.65 & < 0.55 & < 0.20 & < 0.16\\ 
    \midrule
    \ls{}   & \best{1.36} & 32.14 & 66.28 & 46.18 & 159.36 & \best{30.84} & 152.76\\
    \mgda{}          & 3.34 & 166.43 & 384.32 & 218.88 & 590.96 & 56.44 & 249.77\\
    \pcgrad{}    & 3.31 & 72.38 & 154.63 & 98.79 & 299.39 & 75.67 & 332.29\\
    \cagrad{} & 3.53 & 63.18 & 137.89 & 94.77 & 351.06 & 66.69 & 213.11\\
    \imtlg{} & 4.63 & 101.42 & 239.72 & 147.52 & 470.22 & 87.59 & 318.09\\
    \famo{} & 1.65 & 32.69 & 68.46 & 73.35 & \best{156.26} & 34.23 & 114.13\\
    \nashmtl{} & 3.49 & 76.34 & 159.62 & 79.81 & 277.60 & 65.93 & 201.26\\
    \fairgrad{}  & 2.93 & 141.82 & 328.86 & 148.47 & 334.53 & 48.20 & 177.41\\
    \sdmgrad{}   & 1.98 & 31.52 & 72.89 & 63.04 & 200.94 & 35.46 & 128.05\\
    \midrule
    \psmg{}      & 1.85 & \best{29.28} & \best{64.54} & \best{43.07} & 176.66 & 31.11 & \best{113.46}\\
    \bottomrule
    \end{tabular}
    }
  \captionof{table}{Training time per epoch [Min.] and convergence process (averaged over 3 random seeds) in all tasks on NYU-v2.}
  \label{tab:nyuv2time}
  \vspace{-10pt}
\end{table*}

\subsection{Dense Prediction: NYU-v2}

NYU-v2~\citep{silberman2012indoor} is an indoor scene dataset containing 1449 RGBD images, each annotated with dense per-pixel labeling across 13 distinct classes. 
The dataset supports 3 tasks for MOO experiments: image segmentation, depth prediction, and surface normal prediction. 

\textbf{Performance.}
As shown in Table~\ref{tab:nyu-v2}, \psmg{} achieves better performance in both \textbf{MR} and \dm{} compared to existing MOO algorithms. It performs the best in 5 out of the 9 total tasks.
In Figure~\ref{fig:nyuv2_loss}, PSMGD achieves consistently low test losses across all tasks compared to other methods in the initial stage, indicating superior performance across all 3 tasks throughout the 200 training epochs.


\textbf{Convergence measurements.}
In Figure~\ref{fig:nyuv2_loss}, PSMGD outperforms other MOO methods in reducing test loss for all three tasks on the NYU-v2 dataset, particularly in the early training stages. 
Table~\ref{tab:nyuv2time} further highlights \psmg{}'s efficiency with its third lowest training time per epoch (1.85), significantly outperforming other classical gradient manipulation methods like \imtlg{}~\citep{liu2021towards} and \nashmtl{}~\citep{navon2022multi}.
PSMGD achieves the fastest training times by reaching semantic loss in 29.28 and 64.54 minutes, depth loss in 43.07 minutes, and normal loss in 113.46 minutes, when considering the following thresholds: <1.40 and <1.20 for semantic loss, <0.65 for depth loss, and <0.16 for normal loss.
The visualization results can be found in Appendix~\ref{sec:sup_pre}.
\section{Conclusion} \label{sec: conclusion}

In this paper, we propose a novel and efficient algorithm, Periodic Stochastic Multi-Gradient Descent (PSMGD), to accelerate MOO. Our PSMGD algorithm periodically computes dynamic weights and reuses them, significantly reducing the computational load and speeding up MOO training. We establish that PSMGD achieves state-of-the-art convergence rates for strongly convex, general convex, and non-convex functions. Moreover, we demonstrate the superior backpropagation (BP) complexity of our PSMGD algorithm. Extensive experiments confirm that PSMGD delivers performance comparable to or better than existing MOO algorithms, with a substantial reduction in training time. We believe future work could explore preference-based solutions or the entire Pareto set using our efficient PSMGD algorithm.

\section*{Acknowledgments}
JL acknowledges the funding from NSF grants CAREER CNS-2110259, CNS-2112471, IIS-2324052, DARPA YFA D24AP00265, ONR grant N00014-24-1-2729, and AFRL grant PGSC-SC-111374-19s. HY acknowledges the funding support from AI Seed Funding and GWBC Award at RIT.
\bibliography{references}

\newpage
\clearpage
\section{Reproducibility Checklist} \label{sec:checklist}

\paragraph{This paper:}
\begin{itemize}
    \item Includes a conceptual outline and/or pseudocode description of AI methods introduced (yes/partial/no/NA) {\bf yes}
    \item Clearly delineates statements that are opinions, hypothesis, and speculation from objective facts and results (yes/no) {\bf yes}
    \item Provides well-marked pedagogical references for less-familiar readers to gain background necessary to replicate the paper (yes/no) {\bf yes}
\end{itemize}

\paragraph{Does this paper make theoretical contributions? (yes/no) {\bf yes}}
\begin{itemize}
    \item All assumptions and restrictions are stated clearly and formally. (yes/partial/no) {\bf yes}
    \item All novel claims are stated formally (e.g., in theorem statements). (yes/partial/no) {\bf yes}
    \item Proofs of all novel claims are included. (yes/partial/no) {\bf yes}
    \item Proof sketches or intuitions are given for complex and/or novel results. (yes/partial/no) {\bf yes}
    \item Appropriate citations to theoretical tools used are given. (yes/partial/no) {\bf yes}
    \item All theoretical claims are demonstrated empirically to hold. (yes/partial/no/NA) {\bf yes}
    \item All experimental code used to eliminate or disprove claims is included. (yes/no/NA) {\bf yes}
\end{itemize}

\paragraph{Does this paper rely on one or more datasets? (yes/no) {\bf yes}}
\begin{itemize}
    \item A motivation is given for why the experiments are conducted on the selected datasets (yes/partial/no/NA) {\bf yes}
    \item All novel datasets introduced in this paper are included in a data appendix. (yes/partial/no/NA) {\bf yes}
    \item All novel datasets introduced in this paper will be made publicly available upon publication of the paper with a license that allows free usage for research purposes. (yes/partial/no/NA) {\bf yes}
    \item All datasets drawn from the existing literature (potentially including authors’ own previously published work) are accompanied by appropriate citations. (yes/no/NA) {\bf yes}
    \item All datasets drawn from the existing literature (potentially including authors’ own previously published work) are publicly available. (yes/partial/no/NA) {\bf yes}
    \item All datasets that are not publicly available are described in detail, with explanation why publicly available alternatives are not scientifically satisficing. (yes/partial/no/NA) {\bf NA}
\end{itemize}

\paragraph{Does this paper include computational experiments? (yes/no)}
\begin{itemize}
    \item Any code required for pre-processing data is included in the appendix. (yes/partial/no) {\bf yes}
    \item All source code required for conducting and analyzing the experiments is included in a code appendix. (yes/partial/no) {\bf yes}
    \item All source code required for conducting and analyzing the experiments will be made publicly available upon publication of the paper with a license that allows free usage for research purposes. (yes/partial/no) {\bf yes}
    \item All source code implementing new methods have comments detailing the implementation, with references to the paper where each step comes from (yes/partial/no) {\bf yes}
    \item If an algorithm depends on randomness, then the method used for setting seeds is described in a way sufficient to allow replication of results. (yes/partial/no/NA) {\bf yes}
    \item This paper specifies the computing infrastructure used for running experiments (hardware and software), including GPU/CPU models; amount of memory; operating system; names and versions of relevant software libraries and frameworks. (yes/partial/no) {\bf yes}
    \item This paper formally describes evaluation metrics used and explains the motivation for choosing these metrics. (yes/partial/no) {\bf yes}
    \item This paper states the number of algorithm runs used to compute each reported result. (yes/no) {\bf yes}
    \item Analysis of experiments goes beyond single-dimensional summaries of performance (e.g., average; median) to include measures of variation, confidence, or other distributional information. (yes/no) {\bf yes}
    \item The significance of any improvement or decrease in performance is judged using appropriate statistical tests (e.g., Wilcoxon signed-rank). (yes/partial/no) {\bf yes}
    \item This paper lists all final (hyper-)parameters used for each model/algorithm in the paper’s experiments. (yes/partial/no/NA) {\bf yes}
    \item This paper states the number and range of values tried per (hyper-)parameter during development of the paper, along with the criterion used for selecting the final parameter setting. (yes/partial/no/NA) {\bf yes}
\end{itemize}


\newpage
\appendix

\begin{center}
\textbf{Appendix}
\end{center}

\section{Proof} \label{sec: appdx_theory}
\allowdisplaybreaks

\subsection{Convergence Analysis} \label{sub:CA}

\textbf{Notation.}
For notation clarity, we have the following definition:
$\nabla \G(\x_t, \Blambda_t) = \sum_{s \in [S]} \lambda_{t, s} \nabla f_s(\x_t)$, $\hat{ \nabla} \G(\x_t, \Blambda_t) = \sum_{s \in [S]} \lambda_{t, s} \nabla f_s(\x_t, \xi_t)$.
$\hat{\Blambda}_t^*$ is calculated by QP using stochastic gradient $\nabla f_s(\x_t, \xi_t)$, 
and $\Blambda_t = \alpha_t \Blambda_{t-R} + (1 - \alpha_t) \hat{\Blambda}_t^*.$

\textbf{Convergence Metrics.}
We note that existing works use similar but slightly different metrics to measure the convergence of MOO. 
For strongly-convex functions, \cite{liu2021stochastic} uses $ \min_{t=1, \dots, T} \sum_{s \in [S]} \left[ \hat{\lambda}_{t, s}^{*}  f_s(\x_t) - \bar{\lambda}_T f_s(\x_*) \right]$ where $\bar{\lambda}_T = \sum_{t=1}^T \frac{t}{\sum_{t=1}^T t} \hat{\lambda_t}$. 
Here $\hat{\lambda}_{t, s}^{*}$ is calculated by the quadratic programming problem~\ref{eq: QP} with stochastic gradients.
\cite{yang2024federated} uses $\Delta_Q^t = \sum_{s \in [S]} \lambda_{t, s}^{*} \left[ f_s(\x_t) - f_s(\x_*) \right]$ as the metrics. 
For non-convex functions, $\| \bd_t \|^2 = \| \boldsymbol{\lambda}_t^{*T} \nabla \F(\x_t) \|^2$ (or equivalently $\mathbb{E} [\min_{\Blambda_t} \| \nabla \G(\x_t, \Blambda_t) \|^2]$) is usually used as the metrics.
Our convergence metrics follow the conditions from ~\cite{zhou2022convergence}.
For non-convex functions, we use $\frac{1}{T} \sum_{t \in [T]} \mathbb{E} [\| \nabla \G(\x_t, \Blambda_t) \|^2]$ as the metrics, which can directly connect with metrics used in other works as $\frac{1}{T} \sum_{t \in [T]} \mathbb{E} [\min_{\Blambda_t} \| \nabla \G(\x_t, \Blambda_t) \|^2] \leq \frac{1}{T} \sum_{t \in [T]} \mathbb{E} [\| \nabla \G(\x_t, \Blambda_t) \|^2]$.
The same applies for convex case as 
$\frac{1}{T} \sum_{t \in [T]} \mathbb{E} [\max_{\x_t^*} \min_{\Blambda_t^*} \left( G (\x_t, \Blambda_t^*) - G (\x_t^*, \Blambda_t^*) \right) ] \leq \frac{1}{T} \sum_{t \in [T]} \mathbb{E} [G (\x_t, \Blambda_t) - G (\x_*, \Blambda_t)]$.


\textbf{Convergence Rates.}
As shown in Table~\ref{tab:rate}, we compare the convergence rates with baselines across strongly-convex, general convex and non-convex functions. 
We show our \psmg{} can achieve the state-of-the-art convergence rates in these cases, which match existing best results.


\subsection{Auxiliary Lemma}
\begin{lemma}[Lemma 2 and Lemma 7 in ~\cite{zhou2022convergence}]
    When we calculate the $\hat{\Blambda}_t^*$ by QP using stochastic gradient $\nabla f_s(\x_t, \xi_t)$, we have $\ee \left< \nabla \G(\x_t, \Blambda_t), - \hat{\nabla} \G(\x_t, \Blambda_t) \right> \leq 2SBH \sqrt{m \sigma^2 \mv [\Blambda_t]} - \ee [\| \nabla \G(\x_t, \Blambda_t) \|^2]$ and $\mv [\Blambda_t] := \ee [\| \Blambda_t - \mathbb{E} [\Blambda_t]  \|^2] \leq S^2 B^2 (1 - \alpha_t)^2.$
\end{lemma}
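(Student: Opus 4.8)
The plan is to treat the two inequalities separately, establishing the variance bound first since it is exactly the quantity that appears on the right-hand side of the inner-product bound. Throughout I would condition on the history up to the start of iteration $t$, so that $\x_t$ and the previously computed weight $\Blambda_{t-R}$ are deterministic and the only randomness is carried by $\xi_t$ through the stochastic gradients $\nabla f_s(\x_t,\xi_t)$ and hence through $\hat{\Blambda}_t^*$.

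For the variance bound I would start from the momentum update $\Blambda_t = \alpha_t \Blambda_{t-R} + (1-\alpha_t)\hat{\Blambda}_t^*$. Since $\Blambda_{t-R}$ is deterministic under the conditioning, centering gives $\Blambda_t - \mathbb{E}[\Blambda_t] = (1-\alpha_t)(\hat{\Blambda}_t^* - \mathbb{E}[\hat{\Blambda}_t^*])$, so that $\mv[\Blambda_t] = (1-\alpha_t)^2\,\ee[\|\hat{\Blambda}_t^* - \mathbb{E}[\hat{\Blambda}_t^*]\|^2]$. Each coordinate of $\hat{\Blambda}_t^*$ lies in $[0,B]$ by Assumption~\ref{assum:bw}, hence each coordinate of the centered vector has magnitude at most $B$; chaining $\|\cdot\|_2 \le \|\cdot\|_1$ then gives $\|\hat{\Blambda}_t^* - \mathbb{E}[\hat{\Blambda}_t^*]\|_2^2 \le \|\hat{\Blambda}_t^* - \mathbb{E}[\hat{\Blambda}_t^*]\|_1^2 \le (SB)^2$, which yields $\mv[\Blambda_t]\le S^2 B^2 (1-\alpha_t)^2$ directly.

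For the inner-product bound I would write the weighted gradient noise $\mathbf{N}_t := \hat{\nabla}\G(\x_t,\Blambda_t) - \nabla\G(\x_t,\Blambda_t) = \sum_{s\in[S]}\lambda_{t,s}(\nabla f_s(\x_t,\xi_t) - \nabla f_s(\x_t))$, so that the left-hand side splits as $\ee\langle \nabla\G, -\hat{\nabla}\G\rangle = -\ee\|\nabla\G\|^2 - \ee\langle \nabla\G, \mathbf{N}_t\rangle$, reducing the claim to bounding the cross term $\ee\langle \nabla\G, \mathbf{N}_t\rangle$ by $2SBH\sqrt{m\sigma^2\,\mv[\Blambda_t]}$. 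The crucial observation is that this cross term would vanish if the weights were deterministic: substituting $\Blambda_t = \mathbb{E}[\Blambda_t] + \boldsymbol{\delta}_t$ with $\boldsymbol{\delta}_t := \Blambda_t - \mathbb{E}[\Blambda_t]$ into both factors, the piece built purely from $\mathbb{E}[\Blambda_t]$ carries an expected noise factor $\ee[\nabla f_s(\x_t,\xi_t) - \nabla f_s(\x_t)] = 0$ by the unbiasedness in Assumption~\ref{assum:sgd} and drops out, leaving two terms each linear in $\boldsymbol{\delta}_t$ and linear in the noise. Each such term I would bound by Cauchy--Schwarz, pulling out the deterministic bound $\|\nabla\G\| \le \sum_s\lambda_{t,s}\|\nabla f_s(\x_t)\| \le SBH$ from Assumptions~\ref{assum:bw} and~\ref{assum:BG}, pairing the aggregated noise factor (bounded in mean square by the $\sigma^2$-multiple $m\sigma^2$ of Assumption~\ref{assum:sgd}, where $m$ is the objective count inherited from the \cite{zhou2022convergence} notation) with the leftover weight factor $\sqrt{\ee\|\boldsymbol\delta_t\|^2} = \sqrt{\mv[\Blambda_t]}$. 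Each term is then of order $SBH\sqrt{m\sigma^2\,\mv[\Blambda_t]}$, and summing the two produces the factor of $2$.

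The main obstacle is precisely the statistical coupling between the weights $\Blambda_t$ and the gradient noise: because $\hat{\Blambda}_t^*$ is solved from the very sample $\xi_t$ that defines $\hat{\nabla}\G$, one cannot treat $\Blambda_t$ as independent of the noise, and the cross term is genuinely nonzero. The centering decomposition $\Blambda_t = \mathbb{E}[\Blambda_t] + \boldsymbol{\delta}_t$ is what converts this bias into a quantity controlled by $\mv[\Blambda_t]$, and the delicate point is choosing, for each product, which factor to bound deterministically (via $H$ and $B$) and which pair of factors to bind by Cauchy--Schwarz so that exactly the variances $\sigma^2$ and $\mv[\Blambda_t]$ survive; everything else is routine bookkeeping with Assumptions~\ref{assum:LSmooth}--\ref{assum:BG}.
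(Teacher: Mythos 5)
Your proposal is correct and follows essentially the same route as the paper: the same momentum-update centering for the variance bound (the paper anchors at $\Blambda_{t-R}$ via the variance-minimizer property $\ee\|\Blambda_t-\ee[\Blambda_t]\|^2\le\ee\|\Blambda_t-\Blambda_{t-R}\|^2$, where your exact centering identity is an equivalent one-line variant), and the same asymmetric splitting of the cross term into exactly two $\boldsymbol{\delta}_t$-linear pieces plus a vanishing mean-weight piece, each bounded by $SBH\sqrt{S\sigma^2\,\mv[\Blambda_t]}$ (the paper's Young-inequality step with optimized $\epsilon$ is just your Cauchy--Schwarz pairing). Your reading of $m$ as the objective count $S$ inherited from the cited notation is also the right resolution of the paper's own inconsistency.
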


\begin{proof}
    \begin{align}
        \mv [\Blambda_t] &= \ee [\| \Blambda_t - \mathbb{E} [\Blambda_t] \|^2] \\
        &\leq \ee [\| \Blambda_t - \Blambda_{t - R} \|^2] \\
        &= \ee [\| (1 - \alpha_t) (\hat{\Blambda}_t^* - \Blambda_{t - R})  \|^2] \\
        &\leq (1 - \alpha_t)^2 \ee [\| (\hat{\Blambda}_t^* - \Blambda_{t - R}) \|^2] \\
        &\leq (1 - \alpha_t)^2 S^2 B^2 
    \end{align}
    where the first inequality is due to $\ee [\| \Blambda_t - \mathbb{E} [\Blambda_t] \|^2] \leq \ee [\| \Blambda_t - a \|^2]$ for any $a$ and the update of $\Blambda$ is recomputed every $R$ iterations; and the last inequality follows from the fact $0 \leq \hat{\Blambda}_{t,s}^*, \Blambda_{t,s} \leq B$.

    \small
    \begin{align}
        &\ee \left< \nabla \G(\x_t, \Blambda_t), - \hat{\nabla} \G(\x_t, \Blambda_t) \right> \\
        &= \ee \left< \nabla \G(\x_t, \Blambda_t), - \hat{\nabla} \G(\x_t, \Blambda_t) + \nabla \G(\x_t, \Blambda_t) - \nabla \G(\x_t, \Blambda_t) \right> \\
        &= \ee \left< \nabla \G(\x_t, \Blambda_t), - \hat{\nabla} \G(\x_t, \Blambda_t) + \nabla \G(\x_t, \Blambda_t) \right> \\
        &- \ee [\| \nabla \G(\x_t, \Blambda_t) \|^2] 
    \end{align}

    \begin{align}
        &\ee \left< \nabla \G(\x_t, \Blambda_t), - \hat{\nabla} \G(\x_t, \Blambda_t) + \nabla \G(\x_t, \Blambda_t) \right> \\
        &= \ee \left< \nabla \G(\x_t, \Blambda_t), \sum_{s \in [S]} \Blambda_{t,s} (\nabla f_s(\x_t) -  \nabla f_s(\x_t, \xi_t) )\right> \\
        &= \underbrace{\ee \left< \nabla \G(\x_t, \Blambda_t), \sum_{s \in [S]} (\Blambda_{t,s} - \ee[\Blambda_{t,s}]) (\nabla f_s(\x_t) -  \nabla f_s(\x_t, \xi_t) )\right>}_{A_1} \\
        &\quad + \underbrace{\ee \left< \nabla \G(\x_t, \Blambda_t - \ee[\Blambda_t]), \sum_{s \in [S]} \ee[\Blambda_{t,s}] (\nabla f_s(\x_t) -  \nabla f_s(\x_t, \xi_t) )\right>}_{A_2} \\
        &\quad + \underbrace{\ee \left< \nabla \G(\x_t, \ee[\Blambda_t]), \sum_{s \in [S]} \ee[\Blambda_{t,s}] (\nabla f_s(\x_t) -  \nabla f_s(\x_t, \xi_t) )\right>}_{A_3}
    \end{align}

    \begin{align}
        &A_1 = \ee \left< \nabla \G(\x_t, \Blambda_t), \sum_{s \in [S]} (\Blambda_{t,s} - \ee[\Blambda_{t,s}]) (\nabla f_s(\x_t) -  \nabla f_s(\x_t, \xi_t) )\right> \\
        &\leq \ee \| \nabla \G(\x_t, \Blambda_t)\| \ee \left\|\sum_{s \in [S]} (\Blambda_{t,s} - \ee[\Blambda_{t,s}]) (\nabla f_s(\x_t) -  \nabla f_s(\x_t, \xi_t) ) \right\| \\
        &\leq SBH \ee \left[ \sum_{s \in [S]} | \Blambda_{t,s} - \ee[\Blambda_{t,s}] | \left\| (\nabla f_s(\x_t) -  \nabla f_s(\x_t, \xi_t) ) \right\| \right] \\
        &\leq \frac{SBH}{2 \epsilon} \sum_{s \in [S]} \ee \left[ | \Blambda_{t,s} - \ee[\Blambda_{t,s}] |^2 \right] \\
        &+ \frac{\epsilon SBH}{2} \sum_{s \in [S]} \ee \left[ \left\| (\nabla f_s(\x_t) -  \nabla f_s(\x_t, \xi_t) ) \right\|^2 \right] \\
        &\leq \frac{SBH}{2 \epsilon} \mv [\Blambda_t] + \frac{\epsilon S^2BH}{2} \sigma^2 \\
        &\leq SBH \sqrt{S \sigma^2 \mv [\Blambda_t]},
    \end{align}
    \normalsize
    where the last inequality follows from the fact $\epsilon = \frac{\sqrt{\mv [\Blambda_t]}}{\sqrt{S \sigma^2}}$.

    Similarly, using the same trick, we have the upper bound for $A_2$:
    \begin{align}
        A_2 \leq SBH \sqrt{m \sigma^2 \mv [\Blambda_t]}.
    \end{align}

    For $A_3$, it is easy to verify that $A_3 = 0$ as the randomness is only on the stochastic gradient which is unbiased estimations.

    Combining $A_1, A_2$ and $A_3$, we have 
        \begin{align}
        &\ee \left< \nabla \G(\x_t, \Blambda_t), - \hat{\nabla} \G(\x_t, \Blambda_t) \right> \\
        &\leq 2SBH \sqrt{m \sigma^2 \mv [\Blambda_t]} - \ee [\| \nabla \G(\x_t, \Blambda_t) \|^2]
    \end{align}
    
\end{proof}

\begin{lemma}[Lemma 1 in ~\cite{zhou2022convergence}]
    When we calculate the $\hat{\Blambda}_t^*$ by QP using stochastic gradient $\nabla f_s(\x_t, \xi_t)$, we have $\| \mathbb{E} [\hat{ \nabla} \G(\x_t, \Blambda_t)] - \mathbb{E} [\nabla \G(\x_t, \Blambda_t)] \|^2 \leq S \sigma^2 \mv [\Blambda_t] \leq \sigma^2 S^3 B^2 (1 - \alpha_t)^2.$
\end{lemma}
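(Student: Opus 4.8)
The plan is to show that the bias between $\ee[\hat{\nabla}\G(\x_t,\Blambda_t)]$ and $\ee[\nabla\G(\x_t,\Blambda_t)]$ comes entirely from the statistical dependence between the weights $\Blambda_t$ and the stochastic gradients, both of which are functions of the same sample $\xi_t$. First I would write the difference as $\ee[\hat{\nabla}\G(\x_t,\Blambda_t)] - \ee[\nabla\G(\x_t,\Blambda_t)] = \ee\big[\sum_{s\in[S]} \lambda_{t,s}(\nabla f_s(\x_t,\xi_t) - \nabla f_s(\x_t))\big]$, and then split each scalar weight into its mean and a centered part, $\lambda_{t,s} = \ee[\lambda_{t,s}] + (\lambda_{t,s}-\ee[\lambda_{t,s}])$. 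The mean part contributes $\sum_s \ee[\lambda_{t,s}]\,\ee[\nabla f_s(\x_t,\xi_t)-\nabla f_s(\x_t)] = 0$ by the unbiasedness in Assumption~\ref{assum:sgd}, so only the centered part survives, leaving $\ee\big[\sum_s (\lambda_{t,s}-\ee[\lambda_{t,s}])\,\delta_s\big]$, where I abbreviate $\delta_s := \nabla f_s(\x_t,\xi_t)-\nabla f_s(\x_t)$.

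Next I would bound the norm of this surviving quantity term by term. For each $s$, applying Jensen's inequality to move the norm inside the expectation and then the scalar--vector Cauchy--Schwarz inequality gives $\|\ee[(\lambda_{t,s}-\ee[\lambda_{t,s}])\,\delta_s]\| \le \sqrt{\ee[(\lambda_{t,s}-\ee[\lambda_{t,s}])^2]}\,\sqrt{\ee[\|\delta_s\|^2]}$, and the bounded-variance assumption caps the second factor by $\sigma$. Summing over $s$ with the triangle inequality and then applying Cauchy--Schwarz across the $S$ summands pulls out a factor $\sqrt{S}$, yielding $\big\|\ee[\hat{\nabla}\G] - \ee[\nabla\G]\big\| \le \sigma\sqrt{S}\,\sqrt{\sum_s \ee[(\lambda_{t,s}-\ee[\lambda_{t,s}])^2]} = \sigma\sqrt{S}\,\sqrt{\mv[\Blambda_t]}$. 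Squaring gives the first inequality $S\sigma^2\mv[\Blambda_t]$, and substituting the variance estimate $\mv[\Blambda_t]\le S^2B^2(1-\alpha_t)^2$ from the preceding lemma delivers the second inequality $\sigma^2 S^3 B^2 (1-\alpha_t)^2$.

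The main conceptual obstacle is recognizing that the bias is not zero precisely because $\Blambda_t$ and the stochastic gradients share the randomness $\xi_t$, yet it is governed solely by the weight variance: the mean-weight component cancels cleanly through unbiasedness, so the whole bound reduces to controlling $\mv[\Blambda_t]$. The remaining work is routine, namely the two successive Cauchy--Schwarz steps (first scalar--vector per objective, then across the $S$ objectives) that convert the per-objective second moments into the aggregate factor $S\sigma^2\mv[\Blambda_t]$.
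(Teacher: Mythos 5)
Your proof is correct. The paper itself does not reprove this lemma (it is imported directly from \cite{zhou2022convergence}), but your argument --- splitting each $\lambda_{t,s}$ into $\mathbb{E}[\lambda_{t,s}]$ plus a centered part so that unbiasedness kills the mean term, then applying Cauchy--Schwarz per objective and again across the $S$ objectives to produce the factor $S\sigma^2 \mv[\Blambda_t]$, and finally invoking $\mv[\Blambda_t]\le S^2B^2(1-\alpha_t)^2$ --- is exactly the standard route and mirrors the mean-plus-centered decomposition the paper uses in its proof of the companion lemma (the $A_1$, $A_2$, $A_3$ split).
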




\subsection{Proof for Non-convex Functions}
\pmgdNC*

\begin{proof}
\textbf{Case I:}
When we calculate the $\hat{\Blambda}_t^*$ in one step ($t \% R == 0$), we have the following due to $L-$smoothness:
\small
\begin{align}
    &\ee [\G(\x_{t+1}, \Blambda_t)] - \G(\x_t, \Blambda_t) \\
    &\leq \ee \left< \nabla \G(\x_t, \Blambda_t), - \eta_t \hat{\nabla} \G(\x_t, \Blambda_t) \right> \\
    &+ \frac{1}{2} L \eta^2 \ee [\| \hat{ \nabla} \G(\x_t, \Blambda_t) \|^2] \\
    &\leq 2SBH \eta_t \sqrt{m \sigma^2 \mv [\Blambda_t]} - \eta_t \| \nabla \G(\x_t, \Blambda_t) \|^2 \\
    &+ \frac{1}{2} L \eta_t^2 \ee [\| \hat{ \nabla} \G(\x_t, \Blambda_t) \|^2]
\end{align}

\begin{align}
    &\ee [\| \hat{\nabla} \G(\x_t, \Blambda_t) \|^2] \\
    &= 2 \ee [\| \hat{\nabla} \G(\x_t, \Blambda_t) - \nabla \G(\x_t, \Blambda_t) \|^2] + 2 \| \nabla \G(\x_t, \Blambda_t) \|^2 \\
    &\leq 2 \ee [\| \sum_{s \in [S]} \lambda_t^s (\nabla f_s(\x_t, \xi_t) - \nabla f_s(\x_t)) \|^2] + 2 \| \nabla \G(\x_t, \Blambda_t) \|^2 \\
    &\leq 2 \ee \left[ \left( \sum_{s \in [S]} \lambda_t^s \| (\nabla f_s(\x_t, \xi_t) - \nabla f_s(\x_t) \| \right)^2 \right] + 2 \| \nabla \G(\x_t, \Blambda_t) \|^2 \\
    &\leq 2 S^2B^2 \sigma^2 + 2 \| \nabla \G(\x_t, \Blambda_t) \|^2
\end{align}

\begin{align}
    &\ee [\G(\x_{t+1}, \Blambda_t)] - \G(\x_t, \Blambda_t) \\
    &\leq \ee \left< \nabla \G(\x_t, \Blambda_t), - \eta_t \hat{\nabla} \G(\x_t, \Blambda_t) \right> + \frac{1}{2} L \eta^2 \ee [\| \hat{ \nabla} \G(\x_t, \Blambda_t) \|^2] \\
    &= 2SBH \eta_t \sqrt{S \sigma^2 \mv [\Blambda_t]} - \eta_t (1 - L \eta_t) \| \nabla \G(\x_t, \Blambda_t) \|^2 + L \eta_t^2 S^2B^2 \sigma^2 \\
    &\leq 2S^{3/2}B^2H (1 - \alpha_t) \eta_t \sigma  - \eta_t (1 - L \eta_t) \| \nabla \G(\x_t, \Blambda_t) \|^2 + L \eta_t^2 S^2B^2 \sigma^2
\end{align}
\normalsize
If $\frac{1}{2L} \leq \eta_t < \frac{1}{L}$, then
\begin{align}
    &\| \nabla \G(\x_t, \Blambda_t) \|^2 \\
    &\leq \frac{2 [ \G(\x_t, \Blambda_t) - \ee[\G(\x_{t+1}, \Blambda_t)] ]}{\eta_t}\\
    &+\frac{4S^{3/2}B^2H (1 - \alpha_t) \eta_t \sigma + 2L \eta_t^2 S^2B^2 \sigma^2}{\eta_t}
    \label{ineq: nabla G}
\end{align}
\textbf{Case II:}
When we reuse the coefficients from previous steps $\Blambda_t = \Blambda_{t - \tau}$ ($(t - \tau) \%R == 0, t \% R \neq 0, \tau < R$), we have the following:
\small
\begin{align}
    &\ee [\G(\x_{t+1}, \Blambda_t)] - \G(\x_t, \Blambda_t) \\
    &\leq \ee \left< \nabla \G(\x_t, \Blambda_t), - \eta_t \hat{\nabla} \G(\x_t, \Blambda_t) \right> + \frac{1}{2} L \eta^2 \ee [\| \hat{ \nabla} \G(\x_t, \Blambda_t) \|^2] \\
    &\leq - \eta_t \| \nabla \G(\x_t, \Blambda_t) \|^2 + \frac{1}{2} L \eta_t^2 \ee [\| \hat{ \nabla} \G(\x_t, \Blambda_t) \|^2]
\end{align}
\normalsize
where the last inequality follows from the fact that $\ee [\hat{\nabla} \G(\x_t, \Blambda_t)] = \nabla \G(\x_t, \Blambda_t)$ as a fixed $\Blambda_t$ is used and independent of current step sampling.

Similarly, if $\frac{1}{2L} \leq \eta_t < \frac{1}{L}$, then
\small
\begin{align}
    \| \nabla \G(\x_t, \Blambda_t) \|^2 &\leq \frac{2 [\G(\x_t, \Blambda_t) - \ee[\G(\x_{t+1}, \Blambda_t)] ] + 2L \eta_t^2 S^2B^2 \sigma^2}{\eta_t}
\end{align}
\normalsize

For non-increasing learning rate $\eta_t$,
\small
\begin{align}
    &\sum_{t \in [T]} \frac{\mathbb{E} [\G(\x_t, \Blambda_t) - \G(\x_{t+1}, \Blambda_t)]}{\eta_t} \\
    &= \sum_{t \%R == 0} \frac{\mathbb{E} [\G(\x_t, \Blambda_t) - \G(\x_{t+R}, \Blambda_t)]}{\eta_t} \\
    &= \frac{\mathbb{E} [\G(\x_0, \Blambda_0) - \G(\x_T, \Blambda_T)]}{\eta_T} + \sum_{t \%R == 0, t \neq 0} \frac{\mathbb{E} [ \G(\x_t, \Blambda_t - \Blambda_{t-R}) ]}{\eta_t} \\
    &= \frac{\mathbb{E} [\G(\x_0, \Blambda_0) - \G(\x_T, \Blambda_T)]}{\eta_T} \\
    &+ \sum_{t \%R == 0, t \neq 0} \frac{ \mathbb{E} [ \G(\x_t, (1 - \alpha_t ) (\hat{\Blambda}_{t} - \Blambda_{t-R}) ) ]}{\eta_t} \\
    &\leq \frac{2SBF}{\eta_T} + F \sum_{t \%R == 0, t \neq 0} \mathbb{E} \left[ \sum_{s \in [S]} (1 - \alpha_t ) | \hat{\lambda}_{t}^s - \lambda_{t-R}^s | \right]
\end{align}
\normalsize
where the last inequality is due to $ | f_s(\x) | \leq F$.

Telescoping and rearranging, 
\begin{align}
    & \frac{1}{T} \sum_{t \in [T]} \mathbb{E} \| \nabla \G(\x_t, \Blambda_t) \|^2 \\
    &\leq \frac{1}{T} \sum_{t \in [T]} \frac{2 [\G(\x_t, \Blambda_t) - \ee[\G(\x_{t+1}, \Blambda_t)] ] + 2L \eta_t^2 S^2B^2 \sigma^2}{\eta_t} \\
    &+ \frac{4S^{3/2}B^2H \sigma}{T} \sum_{t \% R == 0} (1 - \alpha_t) \\
    &\leq \frac{4SBF}{T \eta_T} + \frac{2F}{T} \sum_{t \%R == 0, t \neq 0} \frac{ \mathbb{E} \left[ \sum_{s \in [S]} (1 - \alpha_t ) | \hat{\lambda}_{t}^s - \lambda_{t-R}^s | \right]}{\eta_t} \\
    &\quad + 2LS^2B^2 \sigma^2 \frac{1}{T} \sum_{t \in [T]} \eta_t + \frac{4S^{3/2}B^2H \sigma}{T} \sum_{t \% R == 0} (1 - \alpha_t)
\end{align}

By setting $1 - \alpha_t = \min \{\frac{\eta_t}{\eta_1}, \frac{\eta_t}{\eta_1 \sqrt{t} \max_{s \in [S]} | \hat{\lambda}_{t}^s - \lambda_{t-R}^s |} \},$ and $\eta_t = \mathcal{O}(\frac{1}{\sqrt{t}})$, the convergence rate is
$\frac{1}{T} \sum_{t \in [T]} \mathbb{E} \| \nabla \G(\x_t, \Blambda_t) \|^2 = \mathcal{O}(\frac{1}{\sqrt{T}}).$

\end{proof}


\subsection{Proof for General Convex Functions}
\pmgdGC*

\begin{proof}
\begin{align}
    &\mathbb{E} [\| \x_{t+1} - \x_* \|^2] = \mathbb{E} [\| \x_t - \eta_t \hat{ \nabla} \G(\x_t, \Blambda_t) - \x_* \|^2] \\
    &= \| \x_t - \x_* \|^2 + \eta_t^2 \mathbb{E} [\| \hat{ \nabla} \G(\x_t, \Blambda_t) \|^2] \\
    &- 2 \eta_t \mathbb{E} \left< \hat{ \nabla} \G(\x_t, \Blambda_t), (\x_t - \x_*) \right> \\
    &= \| \x_t - \x_* \|^2 + \eta_t^2 \mathbb{E} [\| \hat{ \nabla} \G(\x_t, \Blambda_t) \|^2] \\
    &- 2 \eta_t \left< \mathbb{E} [\hat{ \nabla} \G(\x_t, \Blambda_t)] - \mathbb{E} [\nabla \G(\x_t, \Blambda_t)], (\x_t - \x_*) \right> \\
    & \quad
    - 2 \eta_t \left< \mathbb{E} [\nabla \G(\x_t, \Blambda_t)], (\x_t - \x_*) \right> 
\end{align}

Due to convexity, we have $\left< \nabla \G(\x_t, \Blambda_t), (\x_t - \x_*) \right> \geq \G (\x_t, \Blambda_t) - \G (\x_*, \Blambda_t)$.

\textbf{Case I:}
When we calculate the $\hat{\Blambda}_t^*$ in one step ($t \% R == 0$), $\mathbb{E} [\hat{ \nabla} \G(\x_t, \Blambda_t)] - \mathbb{E} [\nabla \G(\x_t, \Blambda_t)] \neq 0$ due to dependence of $\Blambda_t$ on the samples.
\small
\begin{align}
    & 2 \eta_t [\G (\x_t, \Blambda_t) - \G (\x_*, \Blambda_t)] \leq - \mathbb{E} [\| \x_{t+1} - \x_* \|^2] + \| \x_t - \x_* \|^2 \\
    &+ \eta_t^2 \mathbb{E} [\| \hat{ \nabla} \G(\x_t, \Blambda_t) \|^2] \\
    &- 2 \eta_t \left< \mathbb{E} [\hat{ \nabla} \G(\x_t, \Blambda_t)] - \mathbb{E} [\nabla \G(\x_t, \Blambda_t)], (\x_t - \x_*) \right> \\ 
    &\leq - \mathbb{E} [\| \x_{t+1} - \x_* \|^2] + \| \x_t - \x_* \|^2 + \eta_t^2 \mathbb{E} [\| \hat{ \nabla} \G(\x_t, \Blambda_t) \|^2] \\
    &+ 2 \eta_t \| \mathbb{E} [\hat{ \nabla} \G(\x_t, \Blambda_t)] - \mathbb{E} [\nabla \G(\x_t, \Blambda_t)] \| \| (\x_t - \x_*) \| \\
    &\leq - \mathbb{E} [\| \x_{t+1} - \x_* \|^2] + \| \x_t - \x_* \|^2 + 2 S^2B^2 \eta_t^2 \sigma^2 \\
    &+ 2 \eta_t^2 \| \nabla \G(\x_t, \Blambda_t) \|^2 + 2 \eta_t D \sigma S^{3/2} B (1 - \alpha_t) \\
    &\leq - \mathbb{E} [\| \x_{t+1} - \x_* \|^2] + \| \x_t - \x_* \|^2 + 2 S^2B^2 \eta_t^2 \sigma^2 + 2 \eta_t^2 S^2 B^2 H^2 \\
    &+ 2 \eta_t D \sigma S^{3/2} B (1 - \alpha_t)
\end{align}

\begin{align}
    & \G (\x_t, \Blambda_t) - \G (\x_*, \Blambda_t) \leq \frac{- \mathbb{E} [\| \x_{t+1} - \x_* \|^2] + \| \x_t - \x_* \|^2}{2 \eta_t} \\
    &+ \eta_t S^2B^2 \sigma^2 + \eta_t S^2 B^2 H^2 + D \sigma S^{3/2} B (1 - \alpha_t)
\end{align}
\normalsize

\textbf{Case II:}
When we reuse the coefficients from previous steps $\Blambda_t = \Blambda_{t - \tau}$ ($(t - \tau) \%R == 0, t \% R \neq 0, \tau < R$), $\mathbb{E} [\hat{ \nabla} \G(\x_t, \Blambda_t)] - \mathbb{E} [\nabla \G(\x_t, \Blambda_t)] = 0$.

\small
\begin{align}
    & 2 \eta_t [\G (\x_t, \Blambda_t) - \G (\x_*, \Blambda_t)] \\
    &\leq - \mathbb{E} [\| \x_{t+1} - \x_* \|^2] + \| \x_t - \x_* \|^2 + \eta_t^2 \mathbb{E} [\| \hat{ \nabla} \G(\x_t, \Blambda_t) \|^2] \\ 
    &\leq - \mathbb{E} [\| \x_{t+1} - \x_* \|^2] + \| \x_t - \x_* \|^2 + 2 S^2B^2 \eta_t^2 \sigma^2 + 2 \eta_t^2 S^2 B^2 H^2
\end{align}
\normalsize

That is,

\small
\begin{align}
    &\G (\x_t, \Blambda_t) - \G (\x_*, \Blambda_t) \\
    &\leq \frac{- \mathbb{E} [\| \x_{t+1} - \x_* \|^2] + \| \x_t - \x_* \|^2}{2 \eta_t} + \eta_t S^2B^2 \sigma^2 + \eta_t S^2 B^2 H^2
\end{align}
\normalsize

Assume $\eta_t = \frac{1}{\sqrt{t}}$ and $(1 - \alpha_t) = \eta_t$, combining two cases and telescoping:

\begin{align}
    &\frac{1}{T} \sum_{t \in [T]} \mathbb{E} [\G (\x_t, \Blambda_t) - \G (\x_*, \Blambda_t)] \\
    &\leq \frac{\| \x_1 - \x_* \|^2}{T \eta_1} + (2 S^2B^2 \sigma^2 + 2 S^2 B^2 H^2) \frac{1}{T} \sum_{t \in [T]} \eta_t \\
    &+ 2 D \sigma S^{3/2} B \frac{1}{T} \sum_{t \% R == 0} (1 - \alpha_t) \\
    &= \mathcal{O}(1/\sqrt{T})
\end{align}
\end{proof}


\subsection{Proof for Strongly-Convex Functions}
\pmgdSC*

\begin{proof}

\textbf{Case I:}
When we calculate the $\hat{\Blambda}_t^*$ in one step ($t \% R == 0$), $\mathbb{E} [\hat{ \nabla} \G(\x_t, \Blambda_t)] - \mathbb{E} [\nabla \G(\x_t, \Blambda_t)] \neq 0$ due to dependence of $\Blambda_t$ on the samples. Due to the strongly-convex property of $f_s, s \in [S]$,

\begin{align}
    &2 \mu \ee (\G(\x_t, \Blambda_t) - \G(\x_t^*, \Blambda_t)) \leq \ee \| \nabla \G(\x_t, \Blambda_t) \|^2 \\
    &\leq \frac{2 [ \G(\x_t, \Blambda_t) - \ee[\G(\x_{t+1}, \Blambda_t)] ] + 4S^{3/2}B^2H (1 - \alpha_t) \eta_t \sigma}{\eta_t} \\
    & + \frac{2L \eta_t^2 S^2B^2 \sigma^2}{\eta_t}
\end{align}
The second inequality is due to ~\eqref{ineq: nabla G}, where $\x_t^* = \argmin_{\x} \G(\x, \Blambda_t)$.

Rearranging it, we have 
\begin{align}
    &\ee[\G(\x_{t+1}, \Blambda_t) - \G(\x_t^*, \Blambda_t)]  \\
    &\leq  (1 - 2 \mu \eta_t) \ee (\G(\x_t, \Blambda_t) - \G(\x_t^*, \Blambda_t)) \\ & + 4S^{3/2}B^2H (1 - \alpha_t) \eta_t \sigma + 2L \eta_t^2 S^2B^2 \sigma^2.
\end{align}

We know that the $\Blambda$ will be reused for $t+1 \% \neq 0$. That is, $\Blambda_{t+1} = \Blambda_t$ and $\x_{t+1}^* = \x_t^*$.

\begin{align}
    &\ee[\G(\x_{t+1}, \Blambda_{t+1}) - \G(\x_{t+1}^*, \Blambda_{t+1})]  \\
    &\leq  (1 - 2 \mu \eta_t) \ee (\G(\x_t, \Blambda_t) - \G(\x_t^*, \Blambda_t)) \\
    & + 4S^{3/2}B^2H (1 - \alpha_t) \eta_t \sigma + 2L \eta_t^2 S^2B^2 \sigma^2.
\end{align}

\textbf{Case II:}
When we reuse the coefficients from previous steps $\Blambda_t = \Blambda_{t - \tau}$ ($(t - \tau) \%R == 0, t \% R \neq 0, \tau < R$), $\mathbb{E} [\hat{ \nabla} \G(\x_t, \Blambda_t)] = \nabla \G(\x_t, \Blambda_t)$.

\begin{align}
    &2 \mu \ee (\G(\x_t, \Blambda_t) - \G(\x_t^*, \Blambda_t)) \leq \ee \| \nabla \G(\x_t, \Blambda_t) \|^2 \\
    &\leq \frac{2 [\G(\x_t, \Blambda_t) - \ee[\G(\x_{t+1}, \Blambda_t)] ] + 2L \eta_t^2 S^2B^2 \sigma^2}{\eta_t}
\end{align}
The last inequality follows from the fact that
\begin{align}
    &\| \nabla \G(\x_t, \Blambda_t) \|^2 \\
    &\leq \frac{2 [\G(\x_t, \Blambda_t) - \ee[\G(\x_{t+1}, \Blambda_t)] ] + 2L \eta_t^2 S^2B^2 \sigma^2}{\eta_t}
\end{align}

Rearranging it, we have 
\begin{align}
    &\ee[\G(\x_{t+1}, \Blambda_t) - \G(\x_t^*, \Blambda_t)]  \\
    &\leq  (1 - 2 \mu \eta_t) \ee (\G(\x_t, \Blambda_t) - \G(\x_t^*, \Blambda_t)) + 2L \eta_t^2 S^2B^2 \sigma^2
\end{align}

For $t+1 \% R \neq 0$, we know that $\Blambda_{t+1} = \Blambda_t$ and $\x_{t+1}^* = \x_t^*$ by reusing $\Blambda$, then 
\begin{align}
    &\ee[\G(\x_{t+1}, \Blambda_{t+1}) - \G(\x_t^*, \Blambda_{t+1})]  \\
    &\leq  (1 - 2 \mu \eta_t) \ee (\G(\x_t, \Blambda_t) - \G(\x_t^*, \Blambda_t)) + 2L \eta_t^2 S^2B^2 \sigma^2
\end{align}

For $t+1 \% R = 0$, we will recalculate $\Blambda_{t+1}$.

\small
\begin{align}
    &\G(\x_{t+1}, \Blambda_{t+1}) - \G(\x_{t+1}^*, \Blambda_{t+1}) \\
    &= \G(\x_{t+1}, \Blambda_t) - \G(\x_{t+1}^*, \Blambda_t) - \G(\x_{t+1}, \Blambda_t - \Blambda_{t+1}) \\
    &+ \G(\x_{t+1}^*, \Blambda_t - \Blambda_{t+1}) \\
    &\leq \G(\x_{t+1}, \Blambda_t) - \G(\x_{t+1}^*, \Blambda_t) + 2 F (1 - \alpha_t) \sum_{s \in [S]}  | \hat{\lambda}_{t+1}^s - \lambda_t^s | \\
    &\leq (1 - 2 \mu \eta_t) \ee (\G(\x_t, \Blambda_t) - \G(\x_t^*, \Blambda_t)) \\
    &+ 2L \eta_t^2 S^2B^2 \sigma^2 + 2 F (1 - \alpha_t) \sum_{s \in [S]}  | \hat{\lambda}_{t+1}^s - \lambda_t^s | \\
    &\leq (1 - 2 \mu \eta_t) \ee (\G(\x_t, \Blambda_t) - \G(\x_t^*, \Blambda_t)) + 2L \eta_t^2 S^2B^2 \sigma^2 \\
    &+ 4 F (1 - \alpha_t) SB
\end{align}

\begin{align}
    &\G(\x_{t+1}, \Blambda_{t+1}) - \G(\x_{t+1}^*, \Blambda_{t+1}) \\
    &\leq (1 - 2 \mu \eta_t) \ee (\G(\x_t, \Blambda_t) - \G(\x_t^*, \Blambda_t)) + 2L \eta_t^2 S^2B^2 \sigma^2 \\
    &\quad  + 4 F (1 - \alpha_t) SB \mathbf{1} \{t+1 \% R = 0\} \\
    &\quad  + 4S^{3/2}B^2H (1 - \alpha_t) \eta_t \sigma \mathbf{1} \{t \% R = 0\} \\
    &\quad \leq (1 - 2 \mu \eta_t) \ee (\G(\x_t, \Blambda_t) - \G(\x_t^*, \Blambda_t)) \\
    &\quad  + \eta_t^2 (2L S^2B^2 \sigma^2  + 4 F SB \mathbf{1} \{t+1 \% R = 0\} \\
    & \quad  + 4S^{3/2}B^2H \sigma \mathbf{1} \{t \% R = 0\})
\end{align}

\begin{align}
    &\Phi = (2L S^2B^2 \sigma^2  + 4 F SB \mathbf{1} \{t+1 \% R = 0\} \\
    &+ 4S^{3/2}B^2H \sigma \mathbf{1} \{t \% R = 0\}
\end{align}
\normalsize
where we set $(1 - \alpha_t) = \eta_t$.

Using $\eta_t = \frac{c}{T}$ with $c > \frac{1}{2 \mu}$, 
\begin{align}
    &\G(\x_{T}, \Blambda_{T}) - \G(\x_{T}^*, \Blambda_{T}) \\
    &\leq \frac{\max \{2 c^2 \Phi (2\mu c - 1)^{-1}, \G(\x_0, \Blambda_0) - \G(\x_0^*, \Blambda_0) \}}{T}
\end{align}

\end{proof}
\section{Supplement to Experiments}
\label{sec:sup_ex}


\subsection{Supplement convergence experiments for the QM9 and NYU-v2}
\label{sec:sup_pre}
Here are convergence experiments results for the QM9 and NYU-v2. Figure~\ref{fig:QM9loss} illustrates the test loss trajectories over 300 training epochs for the QM-9 dataset, where our \psmg{} is tasked with predicting various molecular properties. The figure compares the performance of \psmg{} with other MOO algorithms. The test loss in Figure~\ref{fig:QM9loss}, computed as the average MAE across all molecular property prediction tasks, provides a clear metric of the convergence performance. The results demonstrate that \psmg{} achieves a smoother and faster reduction in test loss, particularly during the early stages of training, showcasing its efficiency and robustness in handling the MOO challenges presented by the QM-9 dataset.

\begin{figure}[htbp]
    \centering
    \includegraphics[width=\columnwidth]{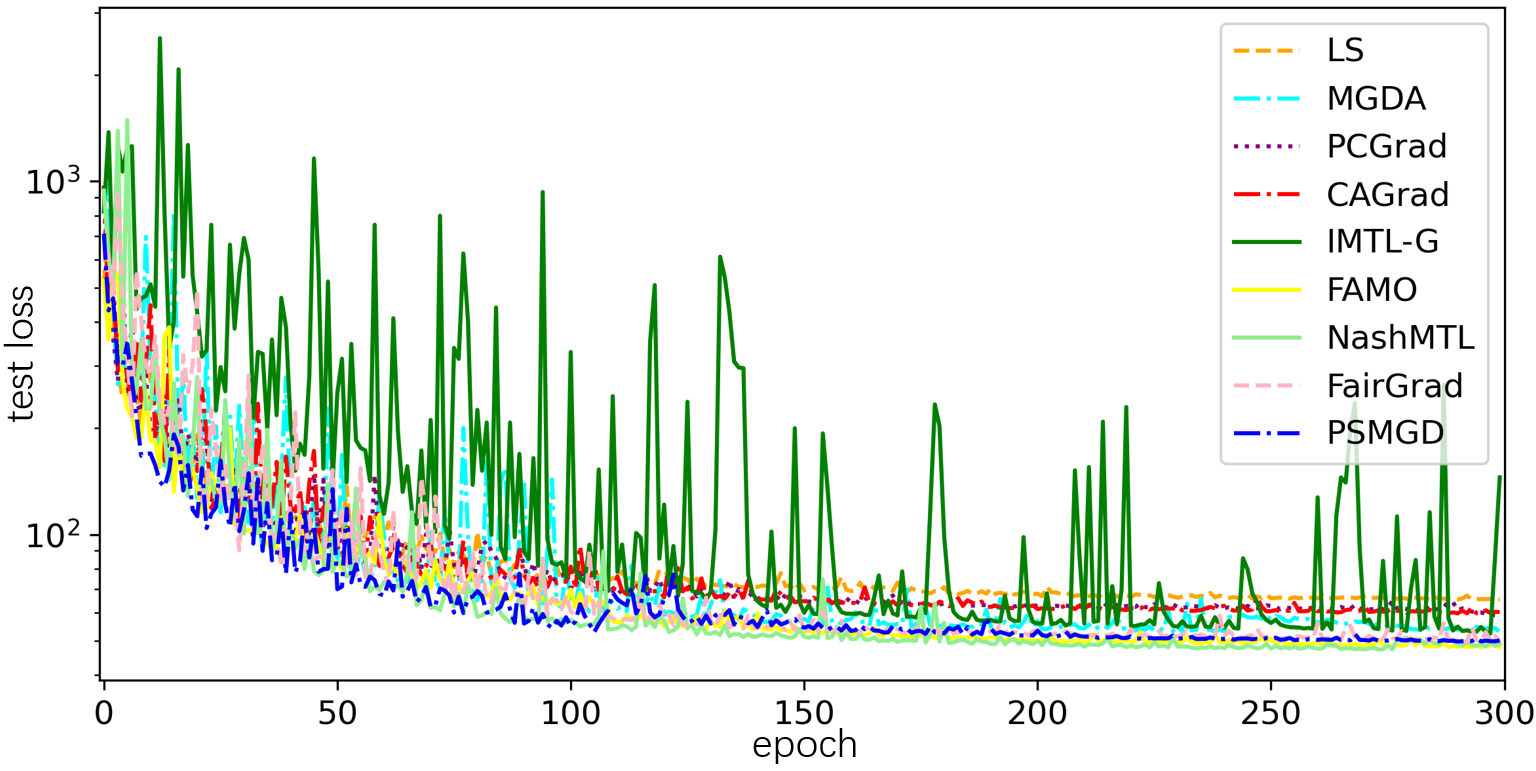}

    \caption{Test loss in training (300 epochs) on QM-9.
    }
    \label{fig:QM9loss}
\end{figure}

Figure~\ref{fig:nyuv2_loss} presents the test loss curves across 200 training epochs for three dense prediction tasks, including image segmentation, depth prediction, and surface normal estimation on the NYU-v2 dataset. It provides a comparative analysis of \psmg{} against other gradient-based multi-objective optimization methods, highlighting its superior performance in reducing test loss across all tasks. The consistent decrease in loss for each task, particularly in the early training epochs, shows \psmg{}'s ability to effectively manage and optimize multiple objectives simultaneously, making it a excellent algorithm for complex MOO scenarios such as those encountered in dense prediction datasets like the NYU-v2. Through the realistic convergence experiments here, we can clearly find \psmg{} can achieve an outstanding convergence performance with advanced performance metrics in all tasks.

\begin{figure*}[htbp]
    \centering
    \includegraphics[width=\textwidth]{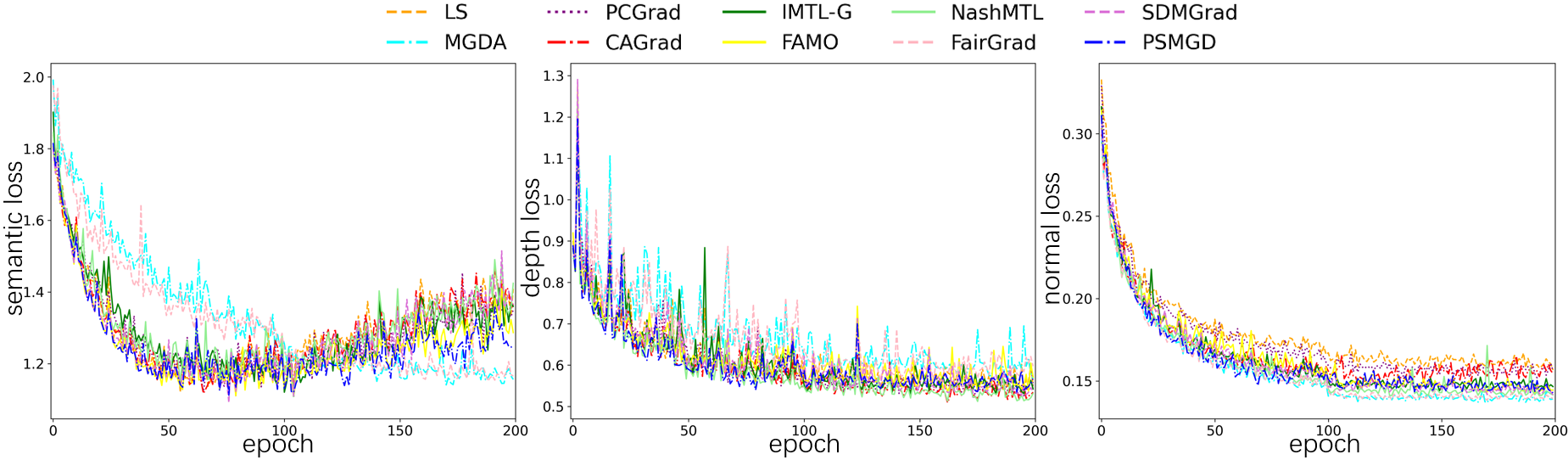}
    \caption{Test loss of segmentation, depth, and surface normal in training (200 epochs) on NYU-v2.
    }
    \label{fig:nyuv2_loss}
\end{figure*}

\subsection{Pre-Study: Multi-MNIST}
\label{sec:Multi}

We present experiments on the Multi-MNIST dataset~\cite{sener2018multi}, a classical 2-task supervised learning benchmark. It is constructed by uniformly sampling MNIST~\citep{lecun1998gradient} images and placing one in the top-left corner and the other in the bottom-right corner. To ensure consistency with the experimental preparation of~\cite{kurin2022defense}, we utilize a modified encoder-decoder version of the LeNet architecture~\citep{lecun1998gradient}. The cross-entropy loss is utilized for both tasks. All methods are trained in 100 epochs using Adam~\citep{kingma2014adam} in the stochastic gradient setting, with an initial learning rate of $1e-2$. They are tuned in $\{1e-3, 1e-2, 1e-1\}$ and yield the best validation results for all optimizers under consideration, exponentially decayed by 0.95 after each epoch with a batch size of 256. Note that, in the Multi-MNIST, every method is repeated 10 times using varying random seeds with the mean results reported. 

Figure~\ref{fig:acc} displays the average test accuracy and the training time per epoch and iteration. The test model was chosen in each run based on the highest average validation accuracy across the configured epochs. In Figure~\ref{fig:acc}(b), the overlapping confidence intervals of average test accuracy indicate that none of the considered algorithms outperforms the others. Their test accuracies in 10 runs of the training process are between 0.935 and 0.95. 

In Figure~\ref{fig:acc}(b) and (c), we conclude the results of the time consumption per epoch and per iteration. They also indicate that \psmg{} has among the lowest training times, both per epoch and per iteration. Heading further into a training epoch, we discover that linear sacralization requires the least amount of time. \psmg{} takes $1.44s$, which is slightly longer than linear sacralization but shorter than the other methods. Additionally, we can see that \psmg{} only takes $3.58ms$, the shortest time for a training iteration, and also second only to linear sacralization.

However, in Figure~\ref{fig:mnist}(b), linear scalarization requires the most epochs to reach convergence (validation accuracy $\ge$ 0.9), resulting in a longer time despite having only one backpropagation per iteration. \psmg{} boasts minimal backpropagation complexity and convergence time due to its streamlined but efficient structure. In particular, in Figure~\ref{fig:mnist}(c), \psmg{} takes $7.19s$, the shortest average convergence time consumption, to achieve convergence in a single run.

In Figure~\ref{fig:mnist}(a), we also observe that \psmg{} has a wider accuracy distribution in both Task 1 and Task 2. We use linear scalarization with grid search to simulate the approximate curve of the Pareto front for the Multi-MNIST dataset. Here, \psmg{}'s task accuracies exhibit a broader distribution, with a pattern that encompasses single-task accuracies adjacent to task 1 and task 2, as well as their overlapping center regions.

Moreover, we also explore the weight adaptation here to validate the motivation of our algorithm \psmg{}. During the MGDA training process, weights are updated regularly. Figure~\ref{fig:weights}(a) shows that the weights for task 1 and task 2 have a consistent average value and low variability. This indicates that the weight vectors are updated minimally throughout the training process. Delving into an epoch, in Figure~\ref{fig:weights}(b), it is evident that most weight updates fall within the range of [0.4, 0.6] per iteration. This suggests that MGDA does not make drastic changes to the weight vectors. This is precisely why we can update the weight vectors every $R$ iteration.

\begin{figure*}[htbp]
    \centering
    \includegraphics[width=\textwidth]{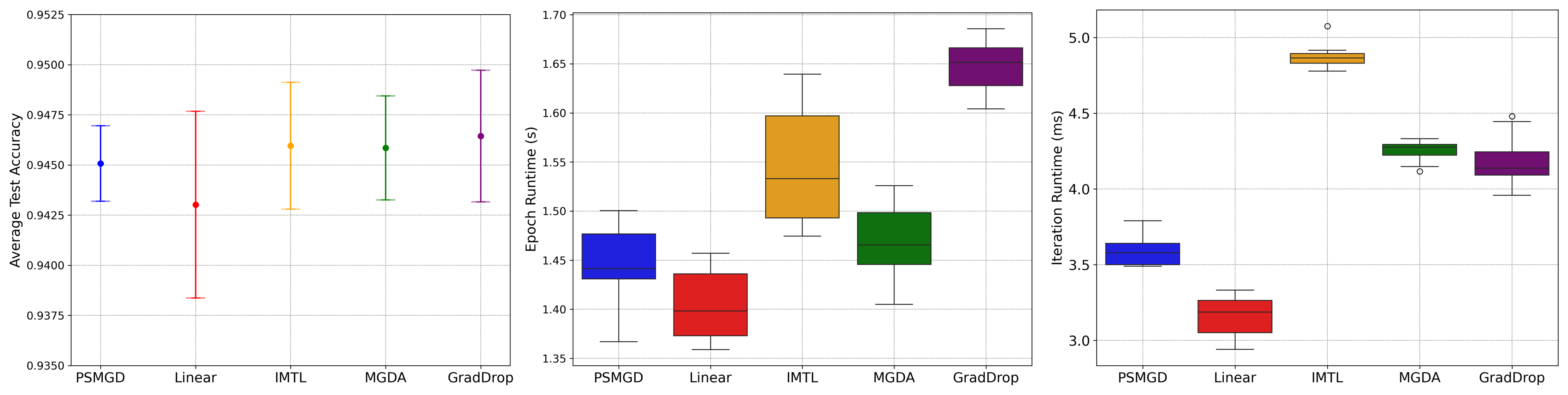}
    \caption{\psmg{} achieves a favorable balance between test accuracy and training time of epoch and iteration. (a) (\textit{Left}) Average test accuracy: mean and 95\% CI (10 runs). (b) (\textit{Middle}) Box plots for the average training time of an epoch (10 runs). (c) (\textit{Right}) Box plots for the average training time of an iteration (10 epochs).
    }
    \label{fig:acc}
\end{figure*}

\begin{figure*}[htbp]
    \centering
    \includegraphics[width=\textwidth]{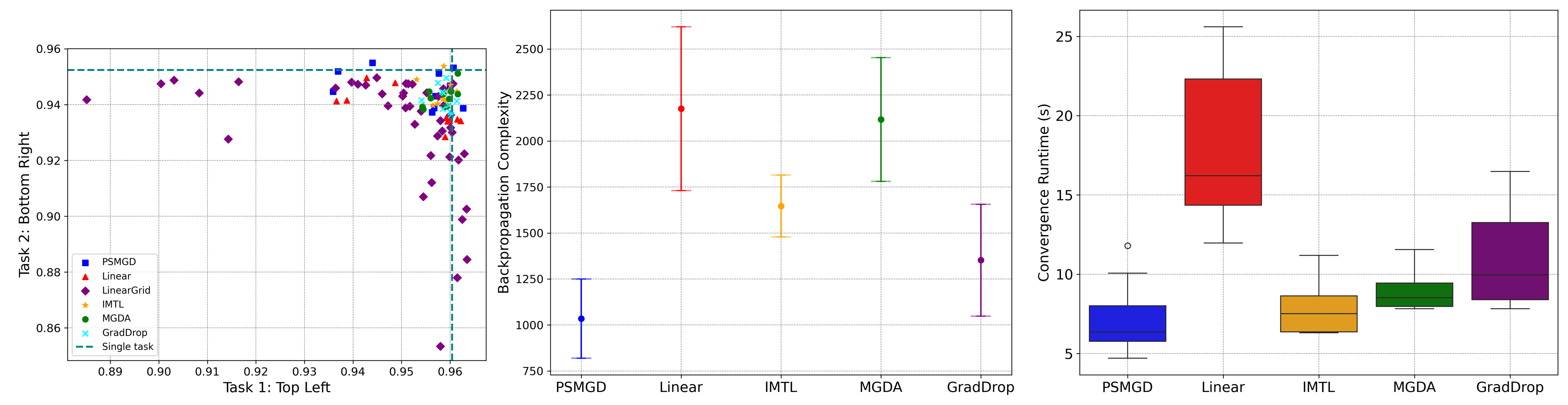}
    \caption{\psmg{} achieves a favorable balance between each task's accuracy and convergence complexity time of epoch in MultiMNIST. (a) (\textit{Left}) The results for MultiMNIST with Task1\&2 accuracy. (b) (\textit{Middle}) Average backpropagation complexity for convergence: mean and 95\% CI (10 runs). (c) (\textit{Right}) Box plots for convergence time (10 runs).
    }
    \label{fig:mnist}
\end{figure*}

\begin{figure}[htbp]
    \centering
    \begin{minipage}[b]{0.49\textwidth}
        \centering
        \includegraphics[width=1\textwidth]{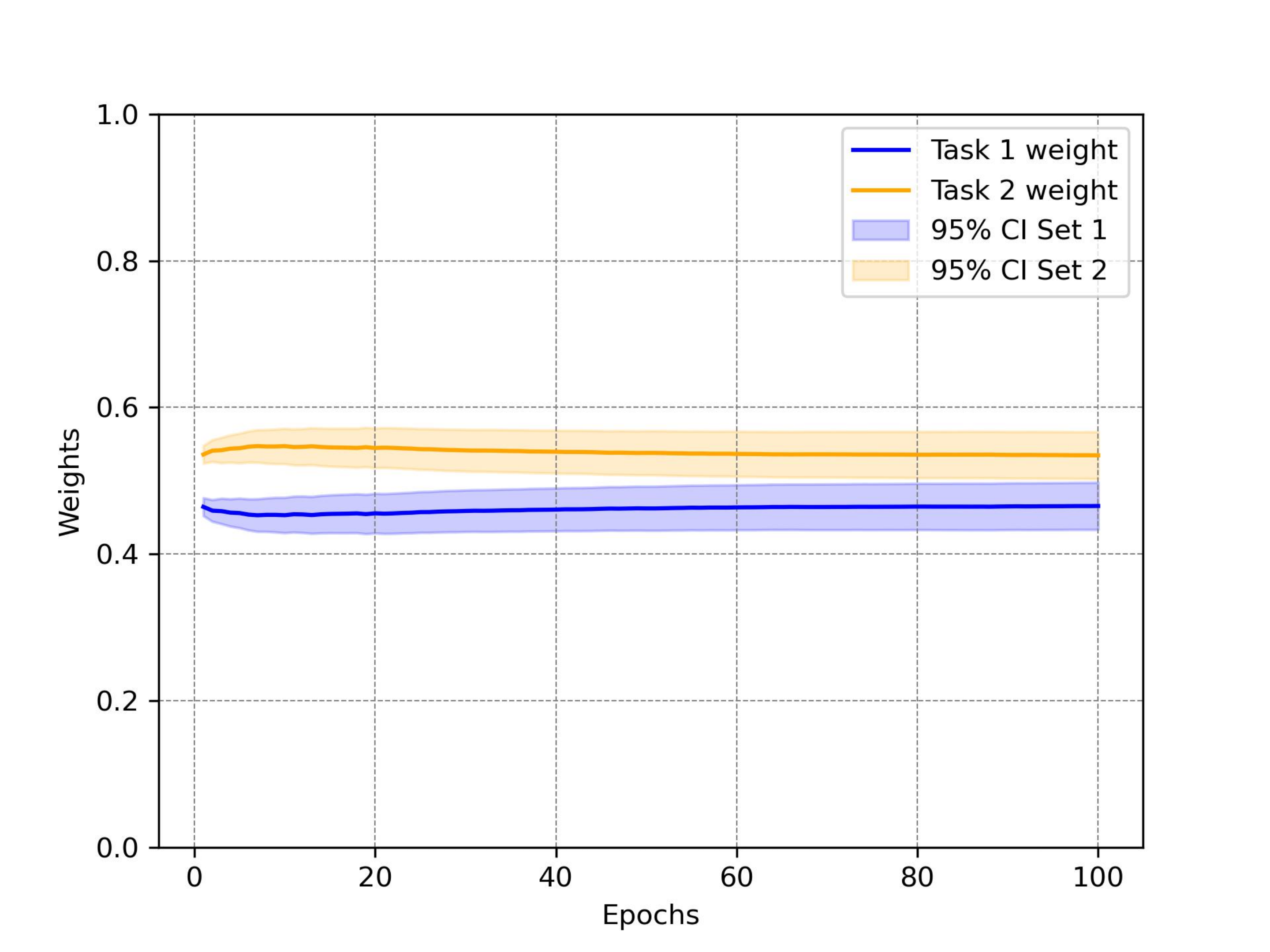}
    \end{minipage}
    \hfill
    \begin{minipage}[b]{0.49\textwidth}
    \centering
    \includegraphics[width=1\textwidth]{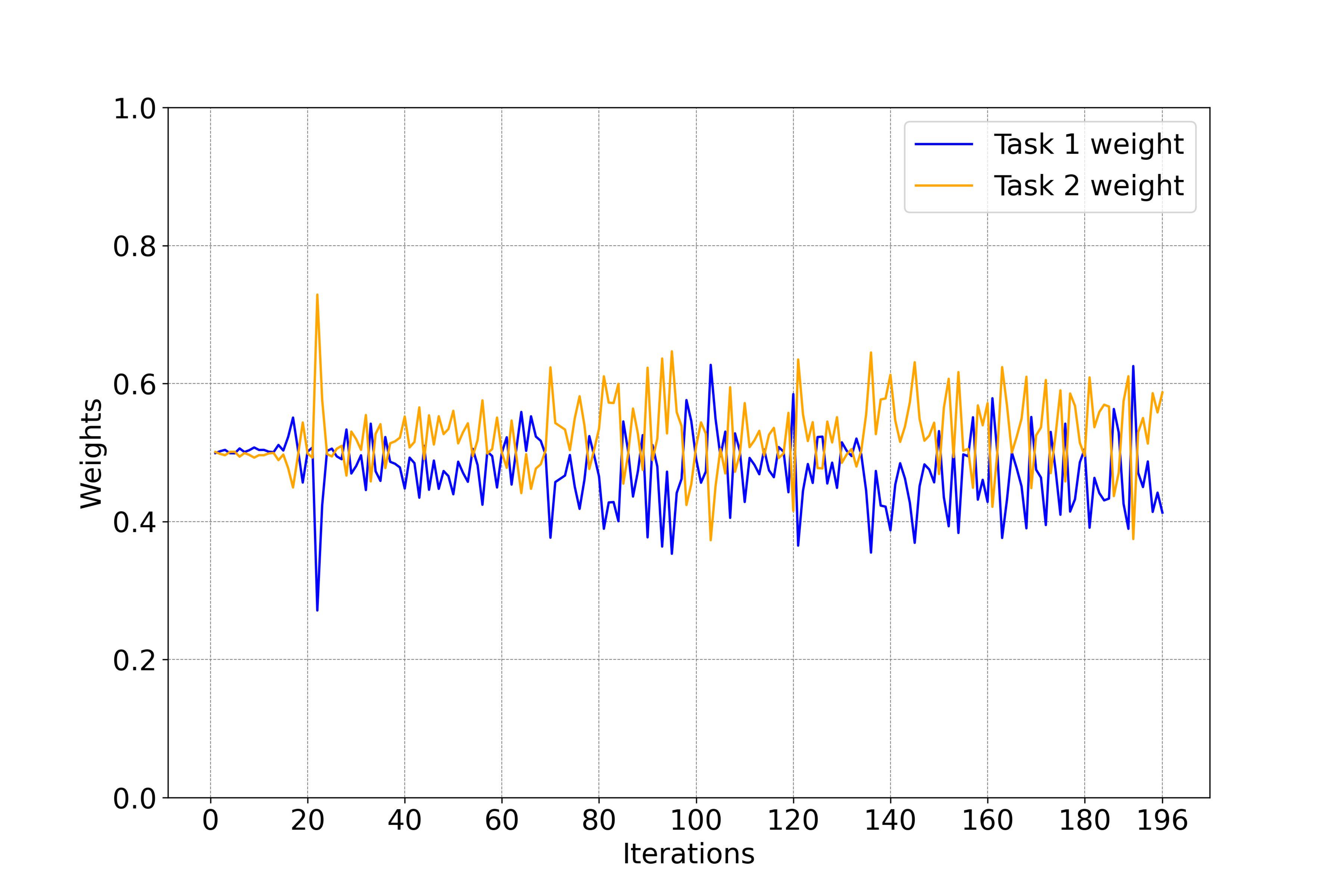}
    \end{minipage}
    \hfill
	\caption{The dynamic weight vectors for MGDA and linear scalarization during the training process for the MultiMNIST. (a) (\textit{Upper}) The results for dynamic weight vectors for MGDA in a run (100 epochs). (b) (\textit{Lower}) The results for dynamic weight vectors for MGDA in an epoch (196 iterations).}
	\label{fig:weights}
\end{figure}

\subsection{Additional Supervised Learning Experiments: CityScapes and CelebA}

\textbf{Experiment Environments.}
1. Hyper-parameters.
For our \psmg{}, we choose the best hyperparameter $R \in \{4, 8, 16\}$ and momentum $\Blambda=0.9$ based on the validation loss in experiments. Specifically, we choose $R=4$ for the Multi-MNIST dataset in Section~\ref{sec:Multi} and $R=8$ for the rest of the datasets in Section~\ref{sec: exp}. 
The hyper-parameters in the baselines follow from the settings in their corresponding papers, which align with existing works.
2. Software and Hardware. 
Following experiment settings from~\citep{liu2024famo,ban2024fair,xiao2024direction}, all experiments except the Multi-MNIST are conducted using PyTorch on a single NVIDIA A100 GPU.
The Multi-MNIST experiment is conducted on a single NVIDIA GeForce RTX 2070 Super GPU under Ubuntu 18.04 LTS.

CityScapes dataset~\cite{cordts2016cityscapes} is a comprehensive collection of high-quality street-scene images, specifically designed for the evaluation of computer vision algorithms in urban environments. It comprises 5K finely annotated images captured across different cities. The dataset supports 2 tasks: semantic segmentation, which involves classifying each pixel in an image into one of 30 classes grouped into 7 super-categories, and depth estimation, which involves predicting the distance of objects from the camera. These 2 tasks provide a robust benchmark for evaluating the effectiveness of MOO methods in handling complex urban scenes, characterized by a diverse range of objects, varying lighting conditions, and dynamic elements like pedestrians and vehicles. We follow experiment settings from~\cite{liu2024famo,ban2024fair,xiao2024direction} and adopt the backbone of MTAN~\cite{liu2019end}, adding objective-specific attention modules to SegNet~\cite{badrinarayanan2017segnet}. Our method is trained for 200 epochs with batch size 8 on the CityScapes. The initial learning rate is set to $1e-4$ for the first 100 epochs and then halved for the remaining epochs.

CelebA dataset~\cite{liu2015deep} is a large-scale face attributes dataset, consisting of over 200K images of celebrities, annotated with 40 different attributes. These attributes encompass a wide range of facial features and expressions, such as smiling, wavy hair, mustache, and many more. Each image is labeled with binary indicators for the presence or absence of these attributes, making CelebA a valuable resource for MOO research. In this context, it can be viewed as an image-level 40-task MTL classification problem, where each task involves predicting the presence of a specific attribute. This setup provides a comprehensive benchmark for evaluating the performance of MTL methods in managing a substantial number of tasks simultaneously, thereby highlighting their ability to capture and leverage shared information across different attributes. We replicate the experiment setup from~\cite{liu2024famo,ban2024fair}. The model consists of a 9-layer convolutional neural network (CNN) as the backbone, with a dedicated linear layer for each objective. Training is performed over 15 epochs, utilizing the Adam optimizer~\cite{kingma2014adam} with a learning rate of $3e-4$, and a batch size of 256.

In Table~\ref{tab:cityscapes-and-celeba}, we observe that \psmg{} demonstrates commendable performance in the final results of CityScapes and CelebA. In particular, in CelebA, while \dm{} of \psmg{} ranks 6th best, its \mr{} ranks the highest among all listed MOO algorithms. This indicates that \psmg{} exhibits a balanced capability in dealing with a significant number of objectives concurrently.

 \begin{table*}[t!]
    \centering
    \resizebox{\textwidth}{!}{%
    \begin{tabular}{lrrrrrrrr}
    \toprule
   \multirow{4}{*}{\textbf{Method}} & \multicolumn{6}{c}{\textbf{CityScapes}} & \multicolumn{2}{c}{\textbf{CelebA}} \\
      \cmidrule(lr){2-7}\cmidrule(lr){8-9}
      &  \multicolumn{2}{c}{Segmentation} & \multicolumn{2}{c}{Depth} & \multirow{2}{*}{\mr{} $\downarrow$} & \multirow{2}{*}{ \dm{} $\downarrow$}  & \multirow{2}{*}{\mr{} $\downarrow$} & \multirow{2}{*}{ \dm{} $\downarrow$}\\
    \cmidrule(lr){2-3}\cmidrule(lr){4-5}
     &  mIoU $\uparrow$ & Pix Acc $\uparrow$ & Abs Err $\downarrow$ & Rel Err $\downarrow$ &  &  & &\\
    \midrule 
    \stl{}      & 74.01 & 93.16 & 0.0125 & 27.77 & & \\
    \midrule
    \ls{}       & 70.95 & 91.73 & 0.0161 & 33.83 & 10.00 & 22.60 & 7.55 & 4.15 \\
    \si{}       & 70.95 & 91.73 & 0.0161 & 33.83 & 12.50 & 14.11 & 9.15 & 7.20 \\
    \rlw{}      & 74.57 & 93.41 & 0.0158 & 47.79 & 12.75 & 24.38 & 6.38 & 1.46 \\
    \dwa{}      & 75.24 & 93.52 & 0.0160 & 44.37 & 9.75 & 21.45 & 8.25 & 3.20 \\
    \uw{}       & 72.02 & 92.85 & 0.0140 & \best{30.13} & 8.75 & 5.89 & 6.80 & 3.23\\
    \mgda{}     & 68.84 & 91.54 & 0.0309 & 33.50 & 13.00 & 44.14 & 12.80 & 14.85 \\
    \pcgrad{}   & 75.13 & 93.48 & 0.0154 & 42.07 & 10.25 & 18.29 & 7.95 & 3.17 \\
    \graddrop{} & 75.27 & 93.53 & 0.0157 & 47.54 & 9.00 & 23.73 & 9.30 & 3.29 \\
    \cagrad{}   & 75.16 & 93.48 & 0.0141 & 37.60 & 8.75 & 11.64 & 7.35 & 2.48 \\
    \imtlg{}    & 75.33 & 93.49 & 0.0135 & 38.41 & 6.50 & 11.10 & 5.67 & 0.84 \\
    \moco{}    & 75.42 & 93.55 & 0.0149 & 34.19 & 5.25 & 9.90 & $\times$ & $\times$ \\
    \nashmtl{}  & 75.41 & 93.66 & \best{0.0129} & 35.02 & 3.50 & 6.82 & 5.97 & 2.84 \\
    \famo{}     & 74.54 & 93.29 & 0.0145 & 32.59 & 9.00  & 8.13 & 5.75 & 1.21 \\
    \fairgrad{}     & \best{75.72} & \best{93.68} & 0.0134 & 32.25 & \best{1.50} & \best{5.18} &  6.47 & \best{0.37} \\
    \sdmgrad{}     &  74.53 & 93.52 & 0.0137 & 34.01 & 7.50  & 7.79 & $\times$  & $\times$\\
    \midrule
    \psmg{}     & 74.90 & 93.37 & 0.0135 & 35.78 & 8.50 & 8.76 & \best{5.60} & 2.60 \\
    \bottomrule 
    \end{tabular}
    }
    \vspace{2pt}
    \caption{Results for the CityScapes (2 tasks) and CelebA (40 tasks) datasets. We run each experiment three time using random seeds with the mean reported. The best result is highlighted in bold.}
    \label{tab:cityscapes-and-celeba}
    \vspace{-20pt}
\end{table*}

\subsection{Ablation on $R$}

To better analyze the effect of the hyperparameter $R$, we present the ablation study on the QM-9 dataset. We choose $R \in \{4, 8, 16, 32\}$ and momentum $\Blambda=0.9$ in averaged over 3 random seeds.

In Figure~\ref{fig:QM9_abloss}, it can be observed that the test loss increases with a higher hyperparameter $R$. Nevertheless, there is a minimal difference in the final convergences between \textsc{PSMGD-4} and \textsc{PSMGD-8}, indicating proximity to the best convergence with $R=8$. For comparison, \ls{} and \mgda{} are also included in Table~\ref{tab:qm9_ab}. We note that the training time per epoch will rise as $R$ decreases, indicating that solving the optimization problem ~\ref{eq: QP} will result in increased time consumption. Among the final results in \dm{}, \textsc{PSMGD-4} demonstrates superior performance with the longest training time in \psmg{} series algorithms. While \textsc{PSMGD-16} and \textsc{PSMGD-32} have a shorter training duration, their \dm{} values do not match those of \textsc{PSMGD-8}.

\begin{figure}[htbp]
    \centering
    \includegraphics[width=\columnwidth]{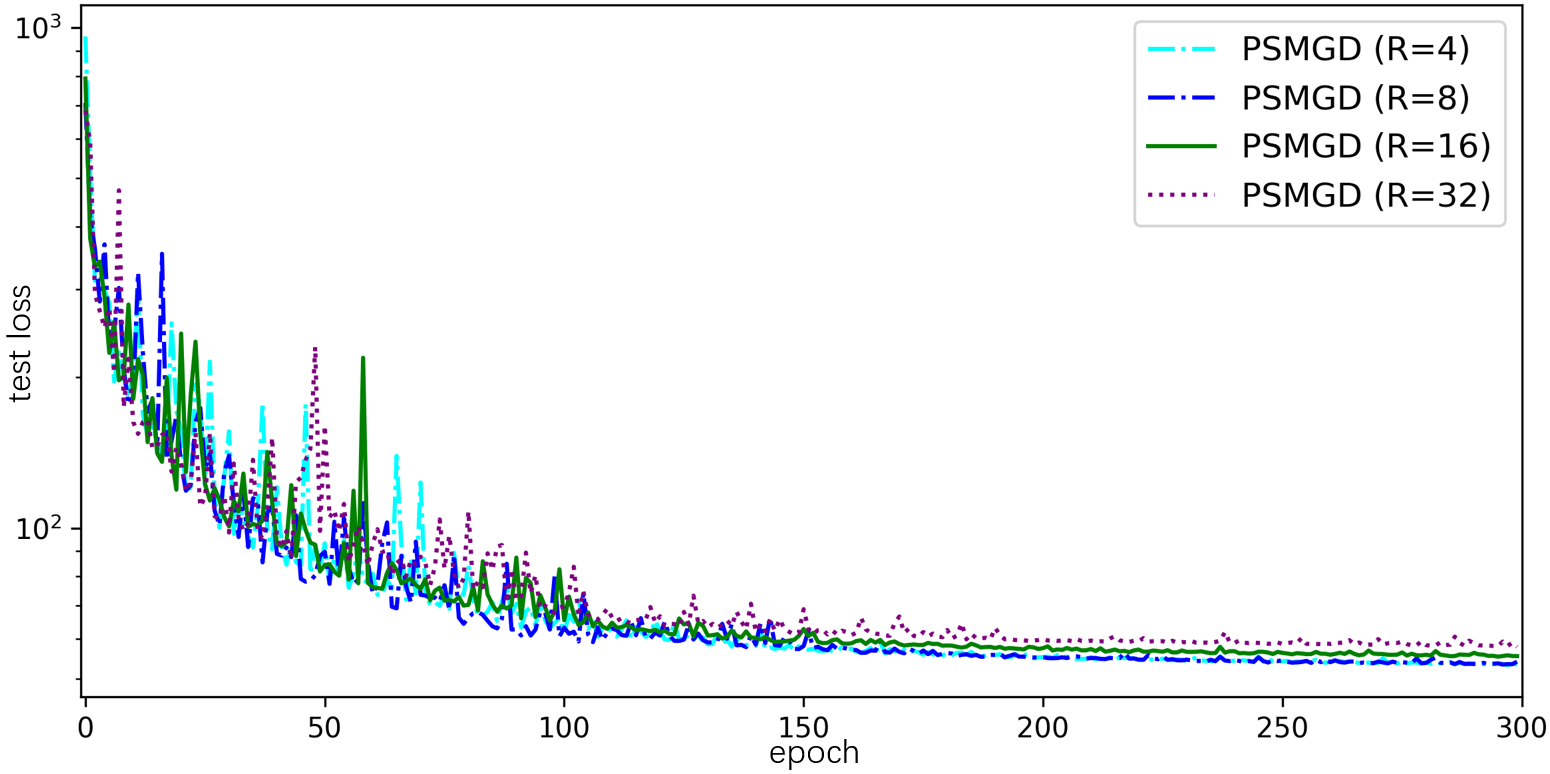}

    \caption{Test loss in training (300 epochs) on QM-9.
    }
    \label{fig:QM9_abloss}
\end{figure}

\begin{table}[htbp]
    \centering
    \resizebox{\columnwidth}{!}{%
    \begin{tabular}{lcr}
    \toprule
    \multirow{2}{*}{\textbf{Method}} & Training time $\downarrow$ & \multirow{2}{*}{ \dm{} $\downarrow$}\\
    & (mean per epoch) \\ 
    \midrule
    \ls{}   & \best{1.70} & 177.6\\
    \mgda{}      & 12.94 & 120.5\\
    \midrule
    \textsc{PSMGD-4}   & 2.61 & \best{88.2}\\
    \textsc{PSMGD-8}   & 2.23 & 92.4\\
    \textsc{PSMGD-16}   & 2.13 & 109.7\\
    \textsc{PSMGD-32} & 2.07 & 116.2\\
    \bottomrule
    \end{tabular}
    }
    \vspace{2pt}
    \caption{Training time per epoch [Min.] (averaged over 3 random seeds) on QM-9.}
    \label{tab:qm9_ab}
    \vspace{-10pt}
\end{table}

\end{document}